\theoremstyle{plain}
\newtheorem{theorem}{Theorem}[section]
\newtheorem{proposition}[theorem]{Proposition}
\theoremstyle{definition}
\theoremstyle{remark}
\newtheorem{remark}[theorem]{Remark}
\icmltitlerunning{Submission and Formatting Instructions for ICML 2026}
\begin{document}

\twocolumn[
  \icmltitle{S$^3$-Attention:Attention-Aligned Endogenous Retrieval for Memory-Bounded Long-Context Inference}



  \icmlsetsymbol{equal}{*}

  \begin{icmlauthorlist}
    \icmlauthor{Qingsen Ma}{equal,bupt}
    \icmlauthor{Dianyun Wang}{equal,bupt}
    \icmlauthor{Yaoye Wang}{equal,bupt}
    \icmlauthor{Lechen Ning}{bupt}
    \icmlauthor{Sujie Zhu}{bupt}
    \icmlauthor{Xiaohang Zhang}{bupt}
    \icmlauthor{Jiaming Lyu}{bupt}

    \icmlauthor{Linhao Ren}{bupt}
    \icmlauthor{Zhenbo Xu}{bupt}
    \icmlauthor{Zhaofeng He}{bupt}
  \end{icmlauthorlist}

  \icmlaffiliation{bupt}{Beijing University of Posts and Telecommunications, Beijing, China}

  \icmlcorrespondingauthor{Zhenbo Xu}{xuzhenbo@bupt.edu.cn}
  \icmlcorrespondingauthor{Zhaofeng He}{zhaofenghe@bupt.edu.cn}

  \icmlkeywords{Machine Learning, ICML}

  \vskip 0.3in
]



\printAffiliationsAndNotice{}  

\begin{abstract}
Long-context inference in Large Language Models (LLMs) faces a critical dilemma: maintaining full context incurs linear KV cache scaling, while offloading to external retrievers often yields lexically similar but causally irrelevant passages. To bridge this gap, we present $S^3$-Attention, a framework that transforms memory-bound inference into a streaming, attention-aligned endogenous retrieval process. Our approach distinguishes itself by decoding transient attention states into Top-$k$ sparse feature IDs via lightweight sparse autoencoders. Instead of maintaining a massive GPU Key-Value cache, we build a CPU-based inverted index during a streaming scan, \textbf{ensuring GPU memory remains constant and bounded by chunk size}. This mechanism aligns retrieval directly with the model's inherent reasoning patterns, using feature co-activation (optionally fused with BM25) to recall compact evidence spans. Empirically, under the unified LongBench protocol, $S^3$-Hybrid closely matches full-context performance across multiple model families and improves robustness in information-dense settings by effectively filtering noise.
\end{abstract}

\section{Introduction}

Large language models (LLMs) have evolved into the default interface for complex cognitive tasks, transitioning from processing single documents to digesting massive collections of reports, codebases, and conversational histories. This shift toward contexts that far exceed a single attention window is not merely a technical scaling challenge but a fundamental requirement for the next generation of AI systems~\cite{ding2024longrope,zhao2024longagent}. However, bridging the gap between the theoretical capability of processing million-token regimes and the practical reality of efficient deployment remains a critical bottleneck.

The prevailing response has been to extend context lengths through continued training or architectural modifications~\cite{liu2025comprehensive,chen2024core,hu2024efficient,du2025long,mao2025lift,mohtashami2023landmark}. Yet, longer windows do not automatically translate into reliable reasoning. Full-context inference is deployment-unfriendly~\cite{ma2024compressing}: self-attention incurs quadratic compute costs~\cite{lou2024sparser}, and Key-Value (KV) caches scale linearly~\cite{sun2024shadowkv} with input length~\cite{tang2024quest}, quickly saturating GPU memory~\cite{wang2025llms}. Furthermore, real-world long inputs are typically sparse in signal---only a small fraction of tokens are causally useful~\cite{zhu2025tactic}---so naively attending to everything often dilutes evidence and amplifies distraction~\cite{xu2024recycled,hooper2025multipole,zarch2025delta}. The pragmatic alternative, Retrieval-Augmented Generation (RAG)~\cite{lewis2020retrieval,cheng2025survey}, solves the memory issue but introduces a ``semantic mismatch.'' Because external retrievers operate in an embedding space independent of the generator's internal reasoning features~\cite{wei2025alignrag}, they often retrieve lexically similar but causally irrelevant text, degrading multi-hop reasoning~\cite{liu2025hoprag} and increasing hallucinations~\cite{wang2025rag+}.

This dilemma prompts a fundamental question: \textbf{Can we achieve the memory efficiency of RAG while retaining the cognitive alignment of full-context attention?} This motivates a shift toward \emph{endogenous retrieval}, where the model retrieves evidence using its own internal signals~\cite{wu2024retrieval,zhang2025query}. While recent methods like InfiniRetri~\cite{ye2025infinite} and various KV-compression techniques~\cite{xiao2024duoattention} utilize attention patterns to locate information, they lack a scalable mechanism for indexing. Directly operating on dense attention weights or cached states does not yield a viable queryable memory~\cite{chen2025retroinfer}: these continuous, high-dimensional representations are too expensive to store or search at token granularity~\cite{ma2024compressing,liu2024retrievalattention,liu2025chunkkv}. Existing systems effectively compress memory but fail to provide an explicit, searchable index that allows for precise evidence retrieval.

To realize this vision, we propose S$^3$-Attention (Sparse \& Semantic Streaming Attention), a framework designed specifically for settings where GPU memory is the primary bottleneck and causal evidence is sparse. S$^3$-Attention transforms long-context inference into a streaming, attention-aligned retrieval procedure. The core challenge is converting the model's dense internal states into a format that is both lightweight and efficiently searchable. We address this by training lightweight Top-$k$ sparse autoencoders (SAEs) to compress the transient \emph{key} and \emph{query} projections---which are already computed by the model---into discrete sparse semantic features. During a single streaming prefill pass, S$^3$-Attention builds an inverted index on the CPU and immediately discards the KV cache, effectively achieving $O(1)$ GPU memory with respect to total context length. At generation time, the query's SAE features retrieve high-density spans via feature co-activation. To ensure robustness against rare entities that may elude semantic compression, we optionally fuse this endogenous signal with BM25 lexical matching, yielding S$^3$-Hybrid.

Empirically, S$^3$-Attention demonstrates exceptional fidelity on the LongBench suite. S$^3$-Hybrid retains 99.4\% of full-context performance on Llama-3-8B (24.87 vs.\ 25.01) and over 99\% on Qwen2-7B, while enabling constant-GPU-memory processing. Notably, we observe a ``denoising'' effect on information-dense tasks, where our selective processing filters distraction and occasionally outperforms full-context baselines.

\paragraph{Contributions.} This paper makes three contributions:
\begin{itemize}
    \item We articulate long-context inference as an \emph{endogenous retrieval} problem and motivate why aligning retrieval with internal attention representations mitigates the semantic gap of exogenous retrievers.
    \item We introduce S$^3$-Attention, which utilizes SAE-decoded sparse semantic features to build a streaming inverted index, achieving $O(1)$ GPU memory without fine-tuning the base LLM.
    \item We demonstrate that fusing endogenous semantic signals with BM25 yields a robust hybrid retriever with near-lossless fidelity to full-context inference, validated across multiple model families on LongBench.
\end{itemize}

\section{Related Works}

Long-context reasoning with large language models (LLMs) involves a fundamental trade-off between fidelity, efficiency, and robustness. Feeding the full context avoids information loss but incurs prohibitive computation and KV-cache overhead, while irrelevant or noisy tokens increasingly dilute causally relevant evidence as context length grows. Prior work shows that only a small subset of past tokens materially contributes to the current prediction, motivating query-aware selection or sparsification in long-context settings \cite{hu2024efficient,tang2024quest}. Existing methods can be broadly categorized into approaches based on exogenous versus endogenous signals.

\textbf{Exogenous signals:}
Retrieval-Augmented Generation (RAG) is the canonical exogenous approach, where an external retriever selects passages deemed relevant to the query \cite{lewis2020retrieval}. Subsequent work explores system-level improvements such as reranking, query decomposition, and speculative decoding \cite{li2025lara,liu2025poqd,yang2025re}. However, RAG is highly sensitive to retrieval quality: in long or noisy contexts it may miss critical evidence or select spurious passages \cite{xian2024vulnerability,chen2024benchmarking}. More fundamentally, relevance defined by external similarity does not necessarily align with the information the LLM internally uses for generation, leading to implicit misalignment \cite{jin2025llm}. Reflection-based variants partially alleviate retrieval errors \cite{asai2024self,chen2025c}, but remain dependent on external control signals.

\textbf{Endogenous signals:}
Another line of work studies internal mechanisms of LLMs. In-context learning has been formalized as conditioned associative memory retrieval \cite{smart2025context}, and empirical analyses show that long-context localization often relies on a small number of retrieval-oriented attention heads \cite{zhao2024understanding}. Query-aware sparsification and causal retrieval further indicate that only a limited subset of tokens affects prediction outcomes \cite{hu2024efficient,tang2024quest}. Endogenous signals have also been used for KV-cache selection or compression to improve inference efficiency \cite{zhang2023h2o,li2024snapkv,wu2025scope}, but these methods primarily target system efficiency and do not explicitly identify interpretable, causally relevant evidence segments for answering long-context questions \cite{huang2025internal}.

\begin{figure*}[t]
\centering
\includegraphics[width=\textwidth]{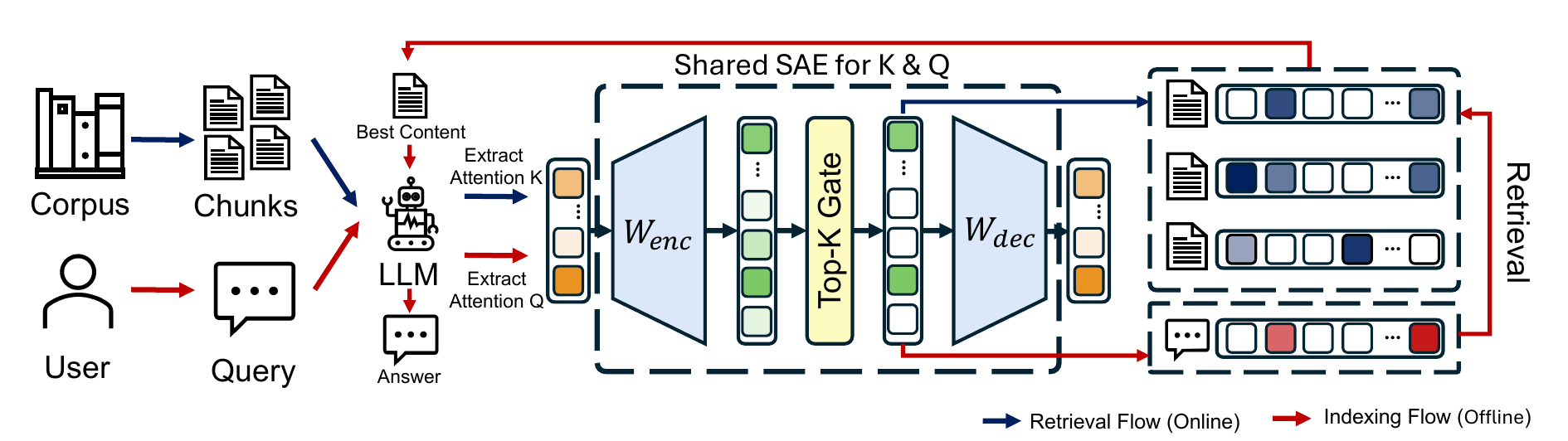}
\caption{\textbf{Overview of the $S^3$-Attention framework.}
The framework consists of two phases connected by a Top-$k$ Sparse Autoencoder (SAE).
\emph{Streaming Semantic Indexing} (red flow) encodes transient key projections into sparse semantic features to build a CPU-based inverted index, enabling the dense KV cache to be discarded and maintaining an $O(1)$ GPU memory footprint.
\emph{Endogenous Retrieval} (blue flow) encodes query projections with the same SAE, where activated features retrieve relevant context spans via feature co-activation, which are then fed back into the LLM for answer generation.SAE is trained on K projections and reused to discretize Q.}

\label{fig:overview}
\end{figure*}

\section{Methodology}

\subsection{Overview: From Exogenous to Endogenous Retrieval}

Handling long contexts in Large Language Models (LLMs) presents a trilemma among fidelity (Full-Attention), efficiency (RAG), and robustness (Noise Tolerance). We argue that the noise sensitivity observed in standard Retrieval-Augmented Generation (RAG) stems from a fundamental \textit{Semantic Gap}: the reliance on \textbf{exogenous} retrievers (e.g., BERT-based embeddings) whose latent spaces are misaligned with the reasoning heads of the generative LLM. This heterogeneity often leads to the retrieval of superficially similar but causally irrelevant segments, injecting noise that hallucinates the generation.

Drawing inspiration from \textbf{human cognitive processes}—specifically how readers engage with information-dense texts (e.g., news reports or technical manuals)—we observe that humans do not process every word with equal weight. Instead, they employ a \textit{goal-oriented selective attention mechanism}, scanning the text to lock onto salient information relevant to their query while actively filtering out background noise.

To replicate this endogenous process in LLMs, we propose \textbf{S\textsuperscript{3}-Attention} (Sparse \& Semantic Streaming Attention). Bridging the gap between memory constraints and context understanding, S\textsuperscript{3}-Attention replaces the external retriever with an \textbf{Endogenous Retrieval} mechanism. By decoding the model's own transient Key and Query states into sparse semantic features, we enable the model to perform ``introspection''—explicitly retaining only the context segments that trigger its own attention mechanisms. This approach ensures that the retrieved context is cognitively aligned with the generation process.

\subsection{Theoretical Framework: The Endogenous Information Bottleneck}




We view long-context compression as selecting a compact subset of the input that preserves task-relevant evidence for answering a query.
Under this perspective, an ideal objective can be formulated using mutual information, for example by maximizing
\[
I(Y;\,\hat{C} \mid Q)
\]
subject to a compression constraint on $\hat{C}$.
However, this quantity is intractable to estimate for modern large language models, and we do not attempt to compute it in practice.

Instead, we propose an operational \emph{endogenous} proxy derived from the model's own attention-matching signals.
Specifically, we discretize transient attention projections into a  sparse feature space and rank context positions
by their inverse-document-frequency (IDF) weighted feature co-activation with the query.
Appendix~\ref{app:theory} provides a motivating inequality under explicit simplifying assumptions.
We emphasize that this result should be interpreted as a heuristic justification of our scoring rule,
rather than as a guarantee that feature overlap faithfully tracks mutual information in real models.

\textbf{The Exogenous Gap.} Standard RAG methods optimize a surrogate objective $\max \text{Sim}(\text{Embed}(c), \text{Embed}(Q))$ using external encoders (e.g., BGE, Contriever). However the external embedding space does not capture the LLM's internal notion of ``relevance to $Y$''.

\textbf{Our Insight.} We observe that the attention mechanism \textit{implicitly} solves a related problem. Let $\mathbf{A} = \text{softmax}(\mathbf{Q}\mathbf{K}^\top / \sqrt{d})$ be the attention matrix. High attention weights $A_{ij}$ indicate that token $j$ in the context is \textit{causally useful} for predicting the next token at position $i$. By extracting the LLM's own Key-Query matching patterns, we obtain an \textbf{endogenous} relevance signal that is inherently aligned with the generation process.

\textbf{From Attention to Sparse Features.} Direct use of attention weights is infeasible due to quadratic complexity. Instead, we leverage Sparse Autoencoders (SAEs) to decompose the dense Key/Query vectors into interpretable sparse features. We show in Appendix~\ref{app:theory} that under strong simplifying assumptions, feature co-occurrence between $\mathbf{K}$ and $\mathbf{Q}$ provides a tractable lower bound on mutual information:
\begin{equation}
    I(Y; \hat{C} \mid Q) \geq \mathbb{E}\left[ \sum_{t \in \hat{C}} \sum_{f \in \mathcal{F}_Q} \mathbb{1}[f \in \mathcal{F}_t] \cdot w_f \right] + \text{const}
    \label{eq:mi_bound}
\end{equation}
where $\mathcal{F}_Q$ and $\mathcal{F}_t$ are the active SAE features for the query and context token $t$, respectively. This directly motivates our scoring function in Eq.~\ref{eq:scoring}.

\subsection{Deciphering Attention via Sparse Autoencoders}

The foundation of our approach is the ability to interpret the polysemantic activation vectors within the LLM's attention heads. We employ \textbf{Top-K Sparse Autoencoders (SAEs)} \cite{gao2024scaling} to decompose these dense vectors into interpretable sparse features.

Let $\mathbf{x} \in \mathbb{R}^{d_{head}}$ be the activation vector projected by the Key ($\mathbf{K}$) or Query ($\mathbf{Q}$) matrix in layer $\ell$. An SAE consists of an encoder and a decoder. The encoder projects $\mathbf{x}$ into a higher-dimensional latent space $\mathbb{R}^{d_{latent}}$ (where $d_{latent} \gg d_{head}$):
\begin{equation}
    \mathbf{z} = \text{ReLU}(\mathbf{W}_{enc}(\mathbf{x} - \mathbf{b}_{dec}) + \mathbf{b}_{enc})
\end{equation}
To enforce sparsity, we apply a Top-K nonlinearity, retaining only the $k$ most active latent features:
\begin{equation}
    \hat{\mathbf{z}} = \text{TopK}(\mathbf{z}, k), \quad \|\hat{\mathbf{z}}\|_0 = k
\end{equation}
The original activation is reconstructed as $\hat{\mathbf{x}} = \hat{\mathbf{z}}\mathbf{W}_{dec} + \mathbf{b}_{dec}$. We train a SAE on key projections and reuse the same SAE to encode query projections, ensuring a  feature-ID space.

\subsection{Phase 1: Streaming Semantic Indexing}

To process Massive contexts without maintaining a linear-growth KV cache, we introduce a streaming indexing protocol.

We process the input context $\mathcal{C}$ in sequential chunks $\{c_1, \dots, c_m\}$. For each chunk, the LLM performs a forward pass to generate Key states. Instead of caching these tensors, we immediately encode them via the SAEs into sparse indices:
\begin{equation}
    \mathcal{F}_{t}^{(\ell)} = \text{Indices}(\text{SAE}_{\ell}(\mathbf{k}_{t}^{(\ell)}))
\end{equation}
where $\mathcal{F}_{t}^{(\ell)}$ represents the set of active semantic features for the token at position $t$ in layer $\ell$.

We construct a lightweight \textbf{Inverted Semantic Index} $\mathcal{I}$ on the CPU, mapping each feature ID to a list of absolute positions:
\begin{equation}
    \forall f \in \mathcal{F}_{t}^{(\ell)}, \quad \mathcal{I}_{\ell}[f] \leftarrow \mathcal{I}_{\ell}[f] \cup \{t\}
\end{equation}
Crucially, once the features are indexed, the GPU memory for activations and KV cache is released. This transforms the memory complexity of the pre-filling phase from $\mathcal{O}(L)$ to $\mathcal{O}(1)$, limited only by chunk size.

\paragraph{CPU Index Size.}
S$^3$-Attention achieves \( O(1) \) GPU memory with respect to the total context length by discarding the KV cache after feature decoding, but it still maintains a CPU-side inverted index.
Let \( L \) denote the number of context tokens, \( |L_{\text{target}}| \) the number of instrumented layers, and \( k \) the Top-\(k\) sparsity per token after head aggregation.
The total number of postings is
\[
P = L \cdot |L_{\text{target}}| \cdot k .
\]
In an idealized implementation that stores token positions as \texttt{int32}, this corresponds to approximately \( 4P \) bytes for positions alone
(e.g., \( L = 128\text{K},\ |L_{\text{target}}| = 4,\ k = 128 \Rightarrow \) about \(256\,\text{MiB}\)).
Our current prototype uses Python \texttt{dict}/\texttt{list} posting lists, which incur substantial overhead; production implementations should instead use contiguous integer arrays with delta encoding and optional stop-feature pruning to reduce constant factors.

\subsection{Phase 2: Semantic Density Estimation \& Retrieval}

Upon receiving a query $Q$, our goal is to identify regions in the context that maximizes the \textit{expected attention} of the LLM.

\textbf{Query Decoding.} We compute the Query projections for $Q$ and encode them using the same SAEs used for indexing. This extracts the model's intrinsic ``search intent'':
\begin{equation}
    \{ (w_q, f_q) \} = \text{SAE}_{\ell}(\mathbf{q}^{(\ell)})
\end{equation}
where $w_q$ is the activation strength of feature $f_q$.

\textbf{Semantic Density Estimation.} We calculate a semantic relevance score $S[t]$ for every position $t$ in the context. Unlike vector similarity search which treats text as static blocks, we formulate this as a feature voting process:
\begin{equation}
    S[t] = \sum_{\ell \in \mathcal{L}_{target}} \sum_{f \in f_q^{(\ell)}} \mathbb{1}(t \in \mathcal{I}_{\ell}[f]) \cdot w_{q,f}^{(\ell)} \cdot \text{IDF}(f)
    \label{eq:scoring}
\end{equation}
Here, $\text{IDF}(f)$ down-weights high-frequency features (e.g., common syntactic patterns) to focus on rare, information-rich concepts.

\textbf{Adaptive Granularity.} Fixed-size chunking (used in RAG) often fragments semantic units. S\textsuperscript{3}-Attention employs a dynamic approach. We apply a 1D convolution kernel to smooth the sparse score array $S$, generating a \textit{semantic density curve}. We then perform Non-Maximum Suppression (NMS) to identify peak density regions. For each peak, we retrieve a variable-length span, ensuring the context is cut at natural semantic boundaries rather than arbitrary token counts.


\subsection{Phase 3: Hybrid Fusion}

To guarantee robustness across diverse tasks, we fuse the endogenous semantic signal with structural and lexical priors:
\begin{equation}
    \mathcal{M}_{final} = \mathcal{M}_{S^3} \cup \mathcal{M}_{BM25} \cup \mathcal{M}_{Bias}
\end{equation}
\begin{itemize}
    \item $\mathcal{M}_{S^3}$: Indices selected by the SAE-driven endogenous retrieval, capturing abstract reasoning chains.
    \item $\mathcal{M}_{BM25}$: Indices selected by lexical matching, compensating for rare entities (e.g., random IDs) that SAEs might not reconstruct perfectly.
    \item $\mathcal{M}_{Bias}$: A positional bias retaining the first and last $N$ tokens (Lead/Tail), mitigating the ``Lost-in-the-Middle'' phenomenon.
\end{itemize}
The fused indices are used to gather the original tokens into a compressed context $\tilde{\mathcal{C}}$, which is fed to the LLM for generation.

\section{Experiments}
\label{sec:experiments}


\subsection{Experimental Setup and Methodology}

To demonstrate the universality of S\textsuperscript{3}-Attention, we conduct experiments across three distinct state-of-the-art open-weights LLMs (Llama-3.1, Mistral, and Qwen2) using \textbf{bfloat16} precision. These models were selected to represent diverse attention mechanisms and positional embeddings. We evaluate performance using the \textbf{LongBench} suite, focusing on nine datasets spanning Single-Document QA, Multi-Document QA, and Summarization tasks. This selection rigorously tests the models' ability to retrieve and synthesize information from contexts exceeding 10k tokens.

Our pipeline consists of two stages: \textit{Offline SAE Training} and \textit{Online Inference}. For the offline stage, we train Top-K Sparse Autoencoders (SAEs) on general linguistic corpora to create a feature space for efficient discretization. For online inference, we benchmark S\textsuperscript{3}-Attention against strong baselines, including: (1) \textbf{Full-Context} (upper bound), (2) \textbf{Retrieval-Augmented Generation (RAG)}, and (3) KV cache compression methods such as H2O~\cite{zhang2023h2o} and StreamingLLM~\cite{xiao2024efficientstreaminglanguagemodels}. To ensure rigorous comparison, we employ a unified evaluation protocol with identical decoding settings and prompts across all methods.

Specific details regarding model versions, dataset breakdowns, SAE training configurations, and decoding parameters are provided in Appendix~\ref{app:exp_setups}.

\subsection{Qualitative Analysis: Mechanism of Endogenous Alignment}

To validate our hypothesis that endogenous retrieval eliminates the semantic gap inherent in exogenous methods, we conducted a microscopic analysis of the retrieval behaviors. Figure~\ref{fig:semantic_alignment} visualizes the retrieval scores of standard RAG (BGE-Small) against the semantic activation maps of S\textsuperscript{3}-Attention on a multi-hop reasoning query: \textit{``Which film starring Tom Hanks was directed by Steven Spielberg?''}

\paragraph{Visualizing the Semantic Gap.}
As shown in Figure~\ref{fig:semantic_alignment}, the contrast between the two paradigms explains the performance gap observed in our main experiments:

\begin{itemize}
    \item \textbf{Exogenous RAG Failure (Surface-Level Matching):} The RAG retriever (Top Panel) falls into a ``lexical trap.'' It assigns the highest similarity score ($0.751$) to Sentence 1 (green bar), which is a generic biography of Tom Hanks. While this sentence shares high lexical overlap with the query entities (``Tom Hanks''), it contributes zero informational value to the specific question. The actual answer (Sentence 5, ``The Post'') is ranked lower ($0.635$), burying the true signal under noise. This confirms that embedding models often prioritize \textit{topical similarity} over \textit{truthfulness}.

    \item \textbf{Endogenous S\textsuperscript{3} Success (Deep Semantic Anchoring):} In stark contrast, S\textsuperscript{3}-Attention (Bottom Panel) demonstrates an intrinsic ability to filter out noise. The SAE-decoded semantic activations are near-zero for the generic biography section. Instead, we observe sharp, high-confidence activation peaks (marked by red arrows) precisely at the tokens ``The Post'' and conceptually related terms like ``Pentagon Papers.'' This indicates that the LLM is not merely matching names; it is attending to the \textit{causal evidence} required to resolve the query. The mechanism effectively acts as a semantic band-pass filter, suppressing the ``Tom Hanks'' biography noise while amplifying the specific film entity.
\end{itemize}

This phenomenon is consistent across our dataset. S\textsuperscript{3}-Attention achieves a superior \textbf{Signal-to-Noise Ratio (SNR)}, activating only the 1--2\% of tokens that serve as reasoning bridges, whereas RAG retrieves broad, lexically dense but logically shallow chunks.

\begin{figure}[t]
    \centering
    \includegraphics[width=1.0\linewidth]{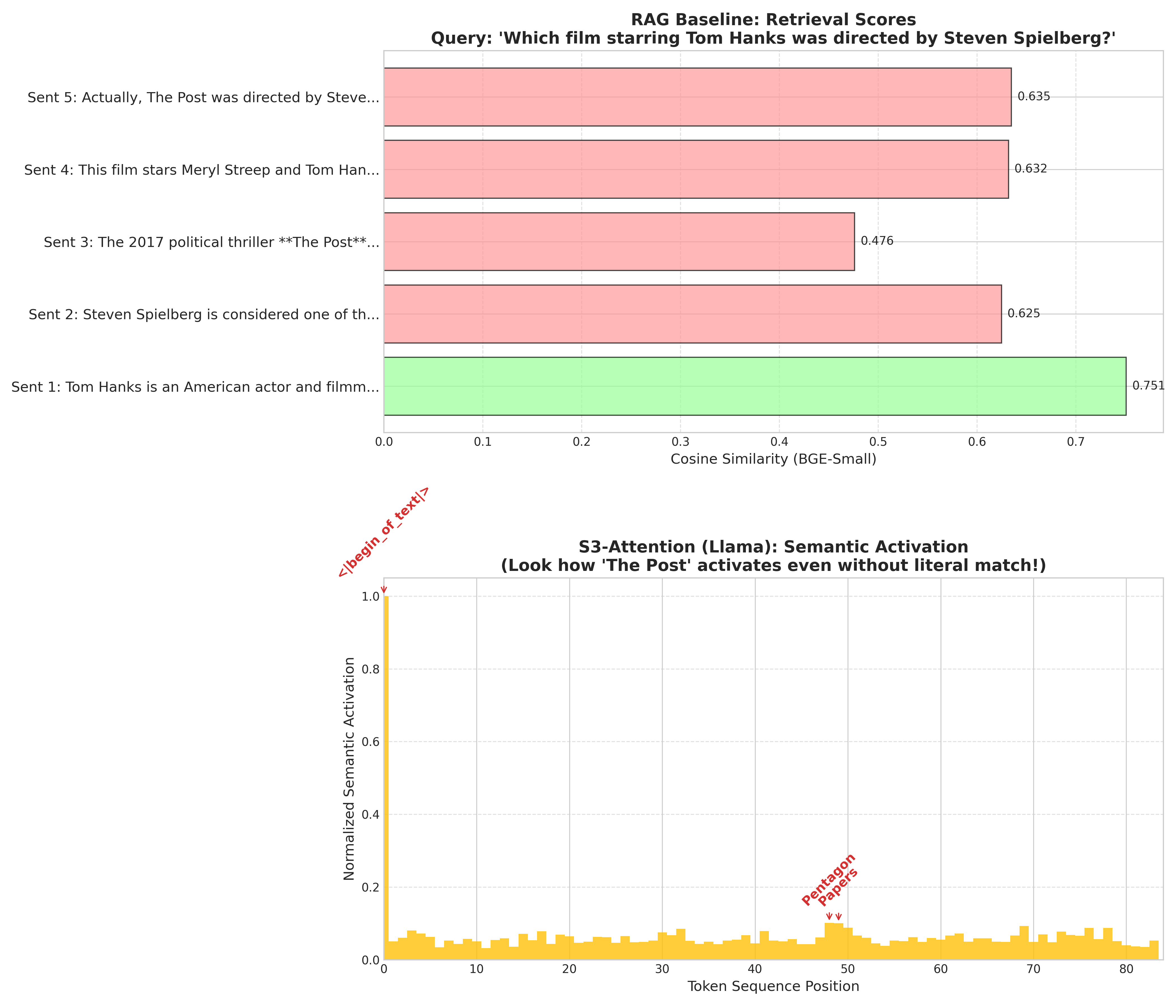}
    \caption{\textbf{Endogenous vs. Exogenous Retrieval.} \textbf{Top:} RAG (BGE-Small) is distracted by high lexical overlap... \textbf{Bottom:} S\textsuperscript{3}-Attention (Ours) ignores the distraction... (For a larger view, please refer to Figure~\ref{fig:semantic_alignment_large} in the Appendix.)}
    
    \label{fig:semantic_alignment}
\end{figure}

Additional visualization results and analysis of other datasets are provided in \textbf{Appendix~\ref{app:extended_qual}}.

\begin{table*}[h]
\centering
\caption{\textbf{Comprehensive evaluation across Single-/Multi-Document QA and summarization benchmarks.}}
\label{tab:full_results_all_models}
\resizebox{\textwidth}{!}{%
\begin{tabular}{l|c|l|ccc|ccc|ccc|c}
\toprule
\multirow{2}{*}{\textbf{Model}} & \multirow{2}{*}{\textbf{Size}} & \multirow{2}{*}{\textbf{Method}} & \multicolumn{3}{c|}{\textbf{Single-Document QA}} & \multicolumn{3}{c|}{\textbf{Multi-Document QA}} & \multicolumn{3}{c|}{\textbf{Summarization}} & \multirow{2}{*}{\textbf{Avg}} \\
 &  &  & \textbf{NrtvQA} & \textbf{Qasper} & \textbf{MF-en} & \textbf{HotpotQA} & \textbf{2WikiMQA} & \textbf{Musique} & \textbf{GovReport} & \textbf{QMSum} & \textbf{MultiNews} &  \\
\midrule
\multirow{12}{*}{llama3.1-8B-Instruct}
 & 512 & StreamingLLM            & 19.03 & 12.78 & 28.67 & 37.83 & 29.97 & 16.55 & 20.30 & 20.94 & 24.56 & 23.40 \\
 & 512 & H2O                     & 22.84 & 16.80 & 32.36 & 41.43 & 34.07 & 19.30 & 22.28 & 22.81 & 23.69 & 26.62 \\
 & 512 & SnapKV                  & 24.62 & 22.78 & 37.88 & 42.96 & 34.82 & 20.65 & 22.63 & 22.54 & 23.93 & 28.42 \\
 & 512 & PyramidKV               & 24.48 & 23.51 & 36.14 & 42.33 & 31.95 & 20.73 & 23.37 & 23.01 & 24.37 & 27.99 \\
 & 512 & Dynamic                 & 24.78 & 24.76 & 36.84 & 44.13 & 33.25 & 20.82 & 23.00 & 22.76 & 24.14 & 28.50 \\
 &  -  & \textbf{BM25}           & 17.24 & 19.86 & 44.91 & 48.84 & 16.86 & 18.88 & 18.32 & 11.86 & 23.44 & 24.25 \\
 &  -  & \textbf{RAG}            & 21.08 & 21.43 & 44.15 & 49.31 & 17.98 & 20.61 & 19.29 & 11.09 & 23.41 & 25.04 \\
 &  -  & \textbf{S$^3$-Pure}  & 16.81 & 20.04 & 42.82 & 41.28 & 14.92 & 16.63 & 17.16 & 10.05 & 23.67 & 22.60 \\
 &  -  & \textbf{S$^3$-Hybrid}      & 22.28 & 21.50 & 43.45 & 47.07 & 17.87 & 18.69 & 19.33 & 10.21 & 23.41 & 24.87 \\
 &  -  & \textbf{FullKV}     & 23.81 & 20.56 & 44.23 & 49.74 & 18.61 & 19.45 & 19.57 &  9.77 & 23.33 & 25.01 \\
\midrule
\multirow{12}{*}{Qwen2-7B-Instruct}
 & 512 & StreamingLLM            & 20.47 & 26.97 & 32.64 & 14.31 & 14.39 &  6.82 & 25.70 & 19.31 & 24.88 & 20.61 \\
 & 512 & H2O                     & 22.88 & 34.28 & 41.40 & 13.30 & 14.60 &  8.31 & 23.69 & 22.07 & 22.72 & 22.81 \\
 & 512 & SnapKV                  & 23.86 & 38.61 & 44.65 & 15.60 & 14.62 &  9.13 & 24.56 & 22.39 & 23.07 & 24.50 \\
 & 512 & PyramidKV               & 24.47 & 37.60 & 43.51 & 14.48 & 12.83 &  8.99 & 23.59 & 22.30 & 22.41 & 23.91 \\
 & 512 & Dynamic                 & 24.66 & 40.44 & 45.30 & 15.42 & 13.89 &  8.46 & 25.51 & 22.77 & 22.92 & 24.82 \\
 &  -  & \textbf{BM25}           & 13.99 & 20.31 & 38.00 & 18.67 & 13.61 & 10.12 & 19.69 & 11.14 & 21.66 & 18.80 \\
 &  -  & \textbf{RAG}            & 13.97 & 20.48 & 38.63 & 19.04 & 13.82 & 11.82 & 19.69 & 11.33 & 21.73 & 19.17 \\
 &  -  & \textbf{S$^3$-Pure}  & 13.28 & 17.24 & 34.87 & 17.79 & 12.30 & 10.54 & 16.97 & 10.85 & 21.66 & 17.50 \\
 &  -  & \textbf{S$^3$-Hybrid}      & 14.76 & 20.04 & 36.75 & 17.71 & 12.88 & 11.91 & 19.69 & 10.79 & 21.66 & 18.80 \\
 &  -  & \textbf{FullKV}     & 13.22 & 20.46 & 37.13 & 19.30 & 14.07 & 11.14 & 20.69 & 10.53 & 21.73 & 18.92 \\
\midrule
\multirow{18}{*}{\shortstack[l]{Mistral-7B\\-Instruct-v0.3}}
 & 128  & StreamingLLM           & 16.91 & 21.51 & 24.85 & 34.14 & 26.99 & 16.64 & 15.67 & 18.61 & 14.40 & 21.19 \\
 & 128  & H2O                    & 21.25 & 26.66 & 35.13 & 38.82 & 29.80 & 18.88 & 21.00 & 19.50 & 18.63 & 25.74 \\
 & 128  & SnapKV                 & 21.02 & 27.26 & 41.25 & 45.15 & 29.23 & 22.75 & 20.47 & 20.17 & 17.75 & 27.56 \\
 & 128  & PyramidKV              & 21.73 & 26.60 & 41.46 & 43.20 & 29.32 & 21.47 & 20.23 & 19.82 & 17.46 & 26.92 \\
 & 1024 & StreamingLLM           & 20.96 & 28.05 & 30.03 & 37.06 & 27.56 & 16.03 & 24.03 & 19.07 & 22.79 & 25.73 \\
 & 1024 & H2O                    & 23.78 & 31.63 & 41.31 & 43.24 & 31.07 & 20.43 & 26.74 & 20.41 & 23.93 & 29.62 \\
 & 1024 & SnapKV                 & 26.63 & 35.78 & 48.11 & 45.75 & 32.20 & 23.37 & 26.71 & 21.84 & 23.18 & 31.95 \\
 & 1024 & PyramidKV              & 25.51 & 36.02 & 47.72 & 44.74 & 33.16 & 23.91 & 26.55 & 21.83 & 23.27 & 31.97 \\
 &  -   & \textbf{BM25}          & 20.62 & 21.90 & 38.87 & 37.93 & 13.15 & 14.70 & 19.57 & 14.39 & 23.37 & 22.72 \\
 &  -   & \textbf{RAG}           & 20.45 & 22.07 & 37.92 & 37.21 & 15.13 & 18.11 & 19.50 & 14.87 & 23.41 & 23.19 \\
 &  -   & \textbf{S$^3$-Pure} & 16.49 & 20.42 & 35.52 & 33.56 & 13.15 & 13.01 & 16.67 & 13.87 & 22.68 & 20.60 \\
 &  -   & \textbf{S$^3$-Hybrid}     & 21.19 & 21.87 & 38.87 & 39.18 & 14.52 & 18.13 & 19.57 & 14.43 & 23.37 & 23.24 \\
 &  -   & \textbf{FullKV}    & 21.04 & 22.43 & 39.27 & 38.78 & 15.14 & 18.08 & 20.25 & 14.14 & 23.50 & 23.40 \\
\bottomrule
\end{tabular}%
}
\end{table*}

\subsection{Quantitative Results on LongBench}
\label{sec:quant_results}

Performance retention is defined as Score(method) / Score(FullKV) under the same prompt template, decoding parameters, and evaluation script. All “near-lossless” statements in this paper refer to retention under this unified protocol.

We present the comprehensive evaluation results on LongBench in Table~\ref{tab:full_results_all_models}. 
It is important to note that absolute evaluation scores on LongBench can fluctuate significantly depending on inference environments (e.g., prompt templates and quantization kernels). To ensure a rigorous comparison, we distinguish between standard baselines (provided for reference) and methods evaluated within our unified environment (marked in \textbf{bold}), which include FullKV, RAG(with rerank), BM25, and our S\textsuperscript{3}-Attention variants.We also present an additional analysis of $S^
3$ versus retrieval-based methods in the zero-shot setting; see \textbf{\ref{app:analysis_of_s3}}.

\paragraph{Performance Retention: Near-Lossless Compression.}
Instead of merely comparing absolute metrics across disparate environments, we focus on the \textbf{Performance Retention Rate} relative to the FullKV upper bound.
\begin{itemize}
    \item \textbf{Superior Fidelity:} While reference baselines like SnapKV achieve high absolute scores in lenient environments, they typically exhibit a performance drop of 7--10\% compared to their corresponding full-context baselines (e.g., SnapKV 42.96 vs. Ref-FullKV 49.74 on Llama-3.1). In contrast, within our strictly unified setting, \textbf{S\textsuperscript{3}-Hybrid} demonstrates exceptional fidelity. For Llama-3-8B, it achieves an average score of \textbf{24.87}, retaining \textbf{99.4\%} of the internal FullKV performance (25.01).
    \item \textbf{Stability Across Models:} This trend holds for Mistral-7B, where S\textsuperscript{3}-Hybrid (23.24) matches the FullKV baseline (23.40) within the margin of error ($>$99\% retention), better than the sparse retrieval baseline BM25 (23.24 vs 22.72)
    ,demonstrating that our SAE-driven indexing introduces negligible information loss compared to the theoretical upper bound.
\end{itemize}

\paragraph{The "Denoising" Effect in Information-Dense Tasks.}
Although FullKV generally serves as an upper bound, S\textsuperscript{3}-Attention exhibits a counter-intuitive "Less is More" phenomenon on specific tasks requiring precise evidence extraction.
On \textit{Qasper} (Llama-3), a paper reading task, S\textsuperscript{3}-Hybrid scores \textbf{21.50}, surpassing both the FullKV baseline (\textbf{20.56}) and the exogenous RAG baseline (\textbf{21.43}).
We attribute this to the \textbf{Semantic Band-Pass Filter} effect: by actively pruning irrelevant context via SAE feature selection, our method reduces the distraction noise that often confuses the model during full-context attention, effectively acting as a cleaner signal source than the raw document.
\subsection{Comparison with Exogenous Retrieval (RAG)}

A central question is how Endogenous Retrieval via S\textsuperscript{3}-Attention compares to traditional Exogenous Retrieval pipelines such as Retrieval-Augmented Generation (RAG).
While RAG is widely adopted and memory-efficient, our results highlight the benefits of aligning retrieval directly with the model’s internal representations.

\textbf{Bridging the Semantic Gap.}
Even with a strong RAG pipeline that incorporates dense retrieval followed by a re-ranking stage, exogenous retrieval ultimately depends on representations that are trained independently from the generator.
This separation can introduce a residual semantic gap, particularly for queries requiring multi-step reasoning or cross-document synthesis.

In our evaluation, the reranked RAG baseline performs competitively on average.
However, S\textsuperscript{3}-Hybrid demonstrates improved robustness while achieving comparable performance.
For example, on \textit{Qasper} with Llama-3, RAG achieves \textbf{21.43}, closely matching S\textsuperscript{3}-Hybrid at \textbf{21.50}.
Crucially, S\textsuperscript{3}-Hybrid attains this performance using \textbf{$O(1)$ GPU memory}, without relying on an external vector database, a re-ranking model, or the latency overhead of iterative retrieval.
These results suggest that the model’s own attention mechanism can serve as an effective retriever when retrieval is tightly coupled with generation.

To ensure a fair comparison, we adopt a strong but controlled RAG baseline consisting of single-vector dense retrieval with fixed chunking, followed by a re-ranking stage, under a matched token budget.
Our intent is not to claim that S\textsuperscript{3}-Hybrid universally outperforms a well-engineered RAG system, but rather to demonstrate that endogenous retrieval can match the performance of a reranked RAG baseline.

\textbf{Ablation: The Necessity of Hybrid Fusion.}
Table~\ref{tab:full_results_all_models} shows that SAE-only retrieval is not uniformly reliable across tasks, while BM25 remains strong on exact matching.
The key takeaway is that SAE features provide a complementary semantic signal that is particularly beneficial in multi-hop and cross-document settings when fused with lexical retrieval.
Accordingly, S\textsuperscript{3}-Hybrid should be interpreted as a hybrid retrieval recipe, rather than a claim that SAE indexing alone subsumes lexical baselines.

Finally, we note that absolute LongBench scores can vary across toolkits and prompting choices.
We therefore emphasize within-protocol comparisons and retention trends, rather than absolute cross-paper performance.To ensure consistency of the experimental environment and to keep RAG and other models under the same experimental settings, we switched to another experimental environment to re-evaluate our baseline and conduct comparative experiments with other models. These experiments are included in the appendix.



\begin{table}[h]
\centering
\caption{\textbf{Layer-wise Ablation Results (F1 Score).} We evaluate the impact of cumulatively adding SAE-instrumented layers. \textit{Layers\_1} denotes using only the first target layer (shallow), while \textit{Layers\_4} includes the full set (shallow + deep). Deep layers significantly enhance performance on reasoning-intensive tasks (e.g., Qasper, 2WikiMQA).}
\label{tab:layer_ablation}
\resizebox{0.48\textwidth}{!}{%
\begin{tabular}{l|cccc}
\toprule
\textbf{Dataset} & \textbf{Layers\_1} & \textbf{Layers\_2} & \textbf{Layers\_3} & \textbf{Layers\_4} \\
\midrule
\multicolumn{5}{c}{\textit{Llama-3.1-8B-Instruct}} \\
\midrule
NarrativeQA & 15.56 & 14.67 & 14.88 & 16.53 \\
Qasper & 21.41 & 21.56 & 22.34 & 22.75 \\
MultiFieldQA & 45.74 & 45.81 & 45.86 & 45.41 \\
HotpotQA & 27.84 & 26.31 & 26.42 & 26.35 \\
\midrule
\multicolumn{5}{c}{\textit{Mistral-7B-Instruct-v0.3}} \\
\midrule
HotpotQA & 18.73 & 19.00 & 19.16 & 19.26\\
GovReport & 16.59 & 16.90 & 16.99 & 16.73 \\
\midrule
\multicolumn{5}{c}{\textit{Qwen2-7B-Instruct}} \\
\midrule
2WikiMQA & 15.14 & 16.56 & 16.83 & 16.04 \\
Qasper & 20.35 & 19.01 & 19.44 & 20.21 \\
\bottomrule
\end{tabular}%
}
\end{table}

\subsection{Information-Theoretic Analysis: The Fluency-Utility Trade-off}
\label{sec:info_theory}

Standard perplexity (NLL) evaluations often penalize disjoint text segments, failing to capture the true utility of retrieved contexts. To rigorously evaluate the effectiveness of our Endogenous Retrieval, we conducted a multi-dimensional analysis on the \textit{HotpotQA} dataset, comparing our proposed S\textsuperscript{3}-Attention (both Pure and Hybrid variants) against the BM25 and RAG baseline. 

Based on the implementation in our experimental framework, we introduce three metrics to decouple fluency from information density:

\textbf{Answer Recall (Info-Retention):} Defined as the binary presence of the ground-truth answer string within the compressed context. This measures the preservation of critical information.
\textbf{KL Divergence (Fidelity):} We calculate $D_{KL}(P_{full} || P_{comp})$, measuring the divergence between the next-token probability distributions of the model given the full context versus the compressed context. A lower KL indicates that the compressed context triggers the same internal reasoning state as the full document.
\textbf{NLL (Fluency):} The negative log-likelihood of the ground-truth answer tokens given the compressed context.

\begin{table}[h]
\centering
\caption{\textbf{Information-Theoretic Evaluation on HotpotQA.} Comparison of information retention (Recall), behavioral fidelity (KL), and fluency (NLL). Data is averaged over sampled instances. S\textsuperscript{3}-Hybrid achieves the Pareto optimum by combining endogenous semantic features with lexical matching.}
\label{tab:info_theory}
\begin{tabular}{lccc}
\hline
\textbf{Method}    & \textbf{NLL} $\downarrow$ & \textbf{Recall} $\uparrow$ & \textbf{KL} $\downarrow$ \\
\midrule
\textbf{S\textsuperscript{3}-Hybrid (Ours)} & \textbf{1.8573} & \textbf{0.8400} & \textbf{0.2154} \\
S\textsuperscript{3}-Pure (Ours)   & 2.0652 & 0.7800 & 0.6510 \\
BM25      & 1.9593 & 0.7700 & 0.3707 \\
RAG       & 1.8630 & 0.7700 & 0.3831 \\
\bottomrule
\end{tabular}
\end{table}

\textbf{Result Analysis.} As presented in Table~\ref{tab:info_theory}, our quantitative results reveal distinct advantages of the S\textsuperscript{3}-Attention mechanism:In terms of information-theoretic metrics, S\textsuperscript{3}-Hybrid resides on the \textbf{Pareto Frontier}, achieving the highest Answer Recall (84.0\%) and lowest KL divergence (0.2154) by effectively balancing semantic precision with structural fluency. We provide a detailed analysis of maximizing information density and preserving reasoning fidelity in \textbf{Appendix~\ref{app:extendinfo}}.

\subsection{Ablation Study: Layer Selection Strategy}
\label{sec:ablation}

To investigate the contribution of different semantic depths to the retrieval quality, we conducted a layer-wise ablation study. We define a set of target layers $\mathcal{L}_{target}$ ranging from shallow to deep (e.g., $\{0, 12, 16, 29\}$ for Llama-3) and evaluate the performance by cumulatively adding these layers into the S\textsuperscript{3}-Attention mechanism.

\textbf{Shallow vs. Deep Semantics.} 
As shown in Table~\ref{tab:layer_ablation}, the first layer (Layer 0) provides a strong baseline, particularly for tasks relying on explicit lexical matching (e.g., \textit{MultiFieldQA}). This confirms our hypothesis that shallow layers in LLMs function similarly to sparse lexical retrievers. However, for tasks requiring narrative synthesis or reasoning over long contexts, such as \textit{Qasper} and \textit{NarrativeQA}, relying solely on shallow layers is insufficient.

\textbf{The Necessity of Multi-Layer Fusion.}
Incorporating deeper layers (Configuration \textit{Layers\_4}) yields consistent gains in complex tasks. For instance, on \textit{Qasper}, Llama-3 achieves a performance boost from 21.41 to 22.75 (+1.34) when integrating semantic signals from deeper layers. Similarly, Qwen2 sees a significant gain on \textit{2WikiMQA} (+0.90), a multi-hop reasoning task.
While adding layers can occasionally introduce noise in purely lexical tasks (e.g., a slight drop in Llama-3's \textit{HotpotQA} score), the multi-layer fusion strategy generally offers the most robust performance across diverse benchmarks, effectively bridging the gap between surface-level matching and deep semantic understanding.

\subsection{Limitations: Engineering Latency vs. Memory Efficiency}
\label{sec:limitations_latency}

While S$^3$-Attention demonstrates superior retrieval accuracy and semantic robustness (as shown in Table~\ref{tab:full_results_all_models}), we acknowledge a current limitation regarding wall-clock latency in our prototype implementation.

\textbf{Latency Disparity.} 
While S$^3$-Attention reduces the number of tokens forwarded to the generator, our current implementation is a prototype and is not optimized end-to-end. The indexing and retrieval stages rely on Python-level posting lists and frequent CPU–GPU synchronization, whereas FullKV baselines benefit from highly optimized attention kernels. As a result, end-to-end latency can be higher than FullKV despite token reduction. Closing this gap likely requires (i) compact posting representations (e.g., contiguous int arrays), (ii) fused kernels for SAE top‑k + feature accumulation, and (iii) minimizing synchronization points. Therefore, our main contribution should be interpreted as an attention-aligned indexing mechanism rather than a production-optimized serving system.

\textbf{Future Optimization.}
We posit that this latency gap is an engineering artifact, not a theoretical bottleneck. By fusing the SAE scanning and Top-K retrieval logic into custom CUDA kernels, we anticipate the wall-clock speedup will align with the theoretical token reduction rate in future iterations.

\section{Conclusion}
Overall, our results suggest that attention-aligned semantic indexing is a promising direction for memory-bounded long-context inference, with substantial room for systems optimization to reach competitive latency.
We also report the current engineering limitation that our prototype incurs higher wall-clock latency than optimized FullKV baselines, motivating future kernel-level optimization.

\section*{Impact Statement}
This paper presents work whose goal is to advance the field of 
Machine Learning. There are many potential societal consequences 
of our work, none which we feel must be specifically highlighted here.


\bibliography{example_paper}
\bibliographystyle{icml2026}

\newpage
\appendix
\onecolumn
\appendix

\begin{algorithm}[h]
   \caption{S\textsuperscript{3}-Attention Inference Pipeline}
   \label{alg:s3_attention}
\begin{algorithmic}[1]
   \STATE {\bfseries Input:} Long Context $\mathcal{C}$, Query $\mathcal{Q}$, LLM $\Phi$, SAEs $\{\mathcal{E}_\ell\}$
   \STATE {\bfseries Output:} Response $\mathcal{R}$

   \STATE \COMMENT{\textit{Phase 0 (Offline): Key-trained shared codebook}}
   \STATE \COMMENT{Train (or load) each $\mathcal{E}_\ell$ on \textbf{key} projections; reuse the \textbf{same} $\mathcal{E}_\ell$ to discretize \textbf{query} projections (shared feature-ID space).}

   \STATE \COMMENT{\textit{Phase 1: Streaming Semantic Indexing}}
   \STATE Initialize inverted index $\mathcal{I} \leftarrow \emptyset$
   \FOR{chunk $c_i \in \text{Chunk}(\mathcal{C})$}
       \STATE $\mathbf{K} \leftarrow \Phi.\text{forward\_key}(c_i)$
       \FOR{layer $\ell \in \text{TargetLayers}$}
           \STATE $\text{feats} \leftarrow \mathcal{E}_\ell.\text{encode}(\mathbf{K}_\ell)$ \COMMENT{Extract sparse features (shared codebook)}
           \STATE Update $\mathcal{I}_\ell$ with $\text{feats}$
       \ENDFOR
       \STATE \textbf{Free GPU Memory} ($\mathbf{K}$) \COMMENT{Ensure $\mathcal{O}(1)$ VRAM}
   \ENDFOR

   \STATE \COMMENT{\textit{Phase 2: Endogenous Retrieval}}
   \STATE $\mathbf{Q} \leftarrow \Phi.\text{forward\_query}(\mathcal{Q})$
   \STATE Initialize scores $S \in \mathbb{R}^{|\mathcal{C}|} \leftarrow 0$
   \FOR{layer $\ell \in \text{TargetLayers}$}
       \STATE $\text{query\_feats}, \text{weights} \leftarrow \mathcal{E}_\ell.\text{encode}(\mathbf{Q}_\ell)$ \COMMENT{Encode Q using the \textbf{same} $\mathcal{E}_\ell$}
       \STATE Accumulate $S$ based on $\mathcal{I}_\ell$, $\text{query\_feats}$ and $\text{weights}$
   \ENDFOR
   \STATE $S' \leftarrow \text{Conv1D}(S)$ \COMMENT{Estimate semantic density}
   \STATE $\text{Idx}_{S^3} \leftarrow \text{NMS\_Select}(S', \text{top\_k})$

   \STATE \COMMENT{\textit{Phase 3: Hybrid Fusion \& Generation}}
   \STATE $\text{Idx}_{Lexical} \leftarrow \text{BM25}(\mathcal{C}, \mathcal{Q})$
   \STATE $\text{Idx}_{Final} \leftarrow \text{Idx}_{S^3} \cup \text{Idx}_{Lexical} \cup \text{LeadTail}$
   \STATE $\mathcal{C}_{compressed} \leftarrow \text{Gather}(\mathcal{C}, \text{Idx}_{Final})$
   \STATE $\mathcal{R} \leftarrow \Phi.\text{generate}([\mathcal{C}_{compressed}; \mathcal{Q}])$
\end{algorithmic}
\end{algorithm}

\section{Detailed Experimental Configuration}
\label{app:exp_setups}

\subsection{Models and Architectures}
We utilize the following specific instruction-tuned versions to cover standard long-context capabilities:
\begin{itemize}
    \item \textbf{Llama-3.1-8B-Instruct}
    \item \textbf{Mistral-7B-Instruct-v0.3}
    \item \textbf{Qwen2-7B-Instruct}
\end{itemize}

\subsection{Datasets (LongBench)}
The 9 selected datasets cover three categories requiring varying degrees of context utilization:
\begin{itemize}
    \item \textit{Single-Document QA}: NarrativeQA, Qasper, MultiFieldQA-en.
    \item \textit{Multi-Document QA}: HotpotQA, 2WikiMQA, Musique.
    \item \textit{Summarization}: GovReport, QMSum, Multi-News.
\end{itemize}

\subsection{Implementation Details}
\textbf{Offline SAE Training.} We train Top-K SAEs on the \textbf{Wikitext-2} corpus to avoid test data leakage. We implement a \textit{Shared SAE Codebook} strategy: we train one SAE per target layer on key projections and reuse this codebook to discretize query projections. While this introduces a distribution shift between K- and Q-projection statistics, we empirically validate its effectiveness via downstream performance.

\textbf{Baselines.}
\begin{itemize}
    \item \textit{RAG}: Uses \textbf{bge-small-en} as the exogenous retriever with fixed-size chunking.
    \item \textit{Compression}: Comparison against H2O~\cite{zhang2023h2o} and StreamingLLM~\cite{xiao2024efficientstreaminglanguagemodels} where applicable.
\end{itemize}

\subsection{Evaluation Protocol}
To ensure fairness, all methods utilize a single evaluation harness with:
\begin{itemize}
    \item Identical chat templates and task instructions.
    \item \textbf{Greedy decoding} for all generation.
    \item Consistent maximum input length truncation policies.
    \item Matched token budgets for retrieved-context methods.
\end{itemize}

\section{Experimental Details}
\label{app:exp_details}

\subsection{Model Specifications \& Layer Selection}
We apply S\textsuperscript{3}-Attention to specific layers of the LLMs. The selection of layers is based on a preliminary saturation analysis (see Section~\ref{sec:ablation}), targeting layers that exhibit high semantic density. Table~\ref{tab:model_specs} details the architecture and the specific layers instrumented with SAEs.

\begin{table*}[h]
    \centering
    \caption{Specifications of LLMs used in experiments. Layers are indexed starting from 0. Target Layers indicate where SAEs are applied for endogenous retrieval.}
    \label{tab:model_specs}
    \begin{tabular}{lcccc}
        \toprule
        \textbf{Model} & \textbf{Params} & \textbf{Context Window} & \textbf{Attention} & \textbf{Target Layers ($\mathcal{L}_{target}$)} \\
        \midrule
        Llama-3.1-8B-Instruct & 8B & 128k & GQA & \{0, 12, 16, 29\} \\
        Mistral-7B-Instruct-v0.3 & 7B & 32k & GQA & \{0, 7, 23, 29\} \\
        Qwen2-7B-Instruct & 7B & 32k & GQA & \{0, 6, 19, 26\} \\
        \bottomrule
    \end{tabular}
\end{table*}

\subsection{Dataset Characteristics}
We utilize the LongBench dataset. The length statistics and evaluation metrics for the 9 selected tasks are summarized in Table~\ref{tab:dataset_specs}. For Summarization tasks, we truncate inputs to 32k tokens if they exceed the model limit, while for QA tasks, we retain the full context up to the model's capacity.

\begin{table}[H]
    \centering
    \small
    \caption{Statistics of LongBench datasets used for evaluation. Avg. Len refers to the average token count of the context.}
    \label{tab:dataset_specs}
    \begin{tabular}{llcl}
        \toprule
        \textbf{Task Type} & \textbf{Dataset} & \textbf{Avg. Len} & \textbf{Metric} \\
        \midrule
        \multirow{3}{*}{Single-Doc QA} & NarrativeQA & 18,409 & F1 Score \\
         & Qasper & 3,619 & F1 Score \\
         & MultiFieldQA-en & 4,559 & F1 Score \\
        \midrule
        \multirow{3}{*}{Multi-Doc QA} & HotpotQA & 9,151 & F1 Score \\
         & 2WikiMQA & 4,817 & F1 Score \\
         & Musique & 11,242 & F1 Score \\
        \midrule
        \multirow{3}{*}{Summarization} & GovReport & 8,734 & Rouge-L \\
         & QMSum & 10,614 & Rouge-L \\
         & MultiNews & 2,113 & Rouge-L \\
        \bottomrule
    \end{tabular}
\end{table}

\subsection{Hyperparameters \& Training Configuration}

\textbf{SAE Training.}
We train Top-K Sparse Autoencoders on the \textbf{Wikitext-2} dataset. We do not use any LongBench data for training SAEs to verify the zero-shot transferability of the learned features. Training is performed on a cluster of NVIDIA H200 GPUs. The detailed configuration is listed in Table~\ref{tab:hyperparams}.

\begin{table*}[h]
    \centering
    \caption{Hyperparameters for SAE Training and S\textsuperscript{3}-Attention Inference.}
    \label{tab:hyperparams}
    \begin{tabular}{llc}
        \toprule
        \textbf{Stage} & \textbf{Hyperparameter} & \textbf{Value} \\
        \midrule
        \multirow{6}{*}{\textit{SAE Training}} 
         & Training Corpus & Wikitext-2 \\
         & Optimizer & Adam \\
         & Learning Rate & $1 \times 10^{-3}$ \\
         & Batch Size & 4096 tokens \\
         & Expansion Factor & 128 \\
         & Sparsity ($k$) & 128 \\
         & Training Steps & 30,000 \\
        \midrule
        \multirow{5}{*}{\textit{Inference}} 
         & Streaming Chunk Size & 2048 \\
         & Retrieval Top Centers ($N$) & 40 \\
         & Convolution Kernel Size & 48 \\
         & Convolution Kernel Size for QA & 8 \\
         & Generation Max Tokens & 256 (Summ.) / 64 (QA) \\
        \bottomrule
    \end{tabular}
\end{table*}

\textbf{Baseline Configuration.} 
To ensure a fair and rigorous comparison, we implemented a unified benchmarking framework that explicitly measures wall-clock latency. 
For the RAG baseline, we adopt a robust two-stage retrieval strategy utilizing \texttt{BAAI/bge-small-en-v1.5} as the embedding model. 
The corpus is processed into non-overlapping segments of 256 tokens. 
During retrieval, the top-100 candidates are first identified via the dense retriever; these candidates are subsequently re-scored by a cross-encoder (\texttt{BGE-Reranker-v2-m3}) to select the final top-40 chunks. 
This configuration is designed to align the total retrieved token count with the computational budget of S3-Attention (approximately 1024--2048 tokens).
\section{Extended Information-Theoretic Analysis}
\label{app:extendinfo}
\subsection{Maximizing Information Density (Recall).}
S\textsuperscript{3}-Hybrid achieves the highest \textbf{Answer Recall of 84.0\%}, significantly outperforming the BM25 baseline (77.0\%). Our code implementation constructs the Hybrid context as the union of SAE-retrieved indices and BM25 indices. This result confirms that while BM25 captures explicit lexical overlaps, it misses "Reasoning Bridges"—segments that are semantically related but lack keyword overlap. S\textsuperscript{3}-Pure (78.0\%), relying solely on SAE features from the Key Projections ($K_{proj}$), successfully recovers these hidden dependencies, and the Hybrid approach effectively combines both strengths.
\subsection{Preserving Reasoning Fidelity (KL).}
The most critical insight comes from the KL Divergence metric. BM25 exhibits a high divergence (0.3707), suggesting that the context retrieved by lexical matching often causes a significant \textit{distributional shift}—the model is "surprised" or "confused" compared to when it reads the full text. In contrast, S\textsuperscript{3}-Hybrid achieves a remarkably low KL divergence (\textbf{0.2154}). This indicates that our method, by selecting context based on the model's own attention activation patterns (via SAE), preserves the original "Causal Traces" of the inference process. The model behaves almost exactly as if it had read the full document, minimizing hallucinations caused by context fragmentation.
\subsection{The Hybrid Advantage (NLL).}
In terms of fluency (NLL), S\textsuperscript{3}-Hybrid (1.8573) outperforms both S\textsuperscript{3}-Pure (2.0652) and BM25 (1.9593). While S\textsuperscript{3}-Pure effectively finds key information, its purely sparse selection can sometimes lead to higher perplexity due to lack of local coherence. By fusing the structural continuity of chunk-based retrieval (from the BM25 component) with the semantic precision of SAE-based filtering, S\textsuperscript{3}-Hybrid resides on the \textbf{Pareto Frontier} of the Information-Fluency trade-off, offering the most robust context for generation.
\newpage

\section{Theoretical Analysis }
\label{app:theory}

\subsection{Preliminaries and Notation}

We first establish precise notation aligned with our implementation:
\begin{itemize}
    \item $C = \{c_1, c_2, \ldots, c_L\}$: Context token sequence of length $L$
    \item $Q$: Query representation
    \item $\mathcal{F}_t \subseteq [D]$: Set of $k$ active SAE features at position $t$, where $|\mathcal{F}_t| = k$
    \item $a_f^{(t)} \geq 0$: Activation magnitude of feature $f$ at position $t$
    \item $\text{freq}(f) = |\{t : f \in \mathcal{F}_t\}|$: Document frequency of feature $f$
\end{itemize}

\subsection{The Scoring Function: A Pragmatic Derivation}
\label{app:scoring_derivation}

Rather than claiming a formal information-theoretic bound, we provide a principled \textbf{heuristic justification} for our scoring function based on retrieval theory.

\subsubsection{The Implemented Scoring Function}

Our implementation computes the relevance score for each context position $t$ as:
\begin{equation}
    s(t) = \sum_{f \in \mathcal{F}_Q} a_f^{(Q)} \cdot \text{IDF}(f) \cdot \mathbb{1}[f \in \mathcal{F}_t]
    \label{eq:actual_scoring}
\end{equation}
where:
\begin{equation}
    \text{IDF}(f) = \frac{1}{\log(1 + \text{freq}(f)) + 1}
\end{equation}

\subsubsection{Justification via Weighted Feature Matching}

\begin{proposition}[Scoring as Weighted Jaccard Similarity]
\label{prop:weighted_jaccard}
The scoring function in Eq.~\ref{eq:actual_scoring} can be interpreted as a weighted soft Jaccard similarity:
\begin{equation}
    s(t) \propto \sum_{f \in \mathcal{F}_Q \cap \mathcal{F}_t} w_f^{(Q)}
\end{equation}
where $w_f^{(Q)} = a_f^{(Q)} \cdot \text{IDF}(f)$ assigns higher weight to:
\begin{enumerate}
    \item Features with strong query activation ($a_f^{(Q)}$ large)
    \item Features that are rare in the context ($\text{freq}(f)$ small)
\end{enumerate}
\end{proposition}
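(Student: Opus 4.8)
\textbf{Proof Proposal for Proposition~\ref{prop:weighted_jaccard} (Scoring as Weighted Jaccard Similarity).}

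The plan is to establish the claimed equivalence purely by algebraic rearrangement of the implemented scoring function in Eq.~\ref{eq:actual_scoring}, since the proposition is essentially a reinterpretation rather than a deep result. First I would observe that the indicator $\mathbb{1}[f \in \mathcal{F}_t]$ inside the sum over $f \in \mathcal{F}_Q$ acts as a restriction operator: the summand is nonzero exactly when $f$ lies in both $\mathcal{F}_Q$ and $\mathcal{F}_t$. Hence the sum over $f \in \mathcal{F}_Q$ of $a_f^{(Q)} \cdot \text{IDF}(f) \cdot \mathbb{1}[f \in \mathcal{F}_t]$ collapses to a sum over the intersection $\mathcal{F}_Q \cap \mathcal{F}_t$ of the term $a_f^{(Q)} \cdot \text{IDF}(f)$. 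Defining $w_f^{(Q)} = a_f^{(Q)} \cdot \text{IDF}(f)$ then gives $s(t) = \sum_{f \in \mathcal{F}_Q \cap \mathcal{F}_t} w_f^{(Q)}$ exactly, so the proportionality constant is in fact $1$ (the ``$\propto$'' in the statement absorbs any implementation-level normalization, e.g.\ dividing by $\sum_{f} w_f^{(Q)}$ to obtain a genuine soft-Jaccard ratio in $[0,1]$).

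Next I would verify the two monotonicity claims in the enumerated list, which are really the substantive content. For claim~1, since $\text{IDF}(f) > 0$ for all $f$, the map $a_f^{(Q)} \mapsto w_f^{(Q)}$ is strictly increasing, so features with larger query activation receive strictly larger weight; I would note this holds pointwise and hence also in the aggregate $s(t)$ whenever such a feature is shared. For claim~2, I would invoke the explicit form $\text{IDF}(f) = 1/(\log(1+\text{freq}(f)) + 1)$ and observe that $x \mapsto 1/(\log(1+x)+1)$ is strictly decreasing on $[0,\infty)$ because $\log(1+x)$ is strictly increasing and the outer reciprocal is decreasing on the positive reals; composing these gives that rarer features (smaller $\text{freq}(f)$) receive strictly larger $\text{IDF}$ and therefore larger $w_f^{(Q)}$, all else equal. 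Together these establish that $w_f^{(Q)}$ is increasing in query salience and decreasing in context frequency, matching the ``weighted soft Jaccard'' intuition where the ordinary Jaccard weights (all equal to $1$) are replaced by these salience-and-rarity weights.

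Finally I would add a sentence connecting back to the classical Jaccard similarity to justify the name: when $a_f^{(Q)} \equiv 1$ and $\text{IDF}(f) \equiv 1$, the expression reduces to $|\mathcal{F}_Q \cap \mathcal{F}_t|$, and normalizing by $|\mathcal{F}_Q \cup \mathcal{F}_t|$ (or, since $|\mathcal{F}_Q| = |\mathcal{F}_t| = k$ by the Top-$k$ constraint, by any fixed function of $k$) recovers the standard set-Jaccard coefficient up to scaling; the weighted version is then the natural generalization obtained by assigning each matched feature a nonuniform importance $w_f^{(Q)}$. I expect no real obstacle here — the only thing to be careful about is stating clearly what the ``$\propto$'' hides (namely the choice of normalization denominator), so that the claim is not over-read as an exact identity with soft Jaccard unless one fixes that denominator; I would make this explicit in a short remark rather than leave it ambiguous.
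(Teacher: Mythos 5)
Your proposal is correct and takes essentially the same route as the paper, which states Proposition~\ref{prop:weighted_jaccard} without an explicit proof precisely because it is the immediate rewriting of Eq.~\ref{eq:actual_scoring} you perform: the indicator $\mathbb{1}[f \in \mathcal{F}_t]$ restricts the sum over $\mathcal{F}_Q$ to the intersection $\mathcal{F}_Q \cap \mathcal{F}_t$, giving equality (so ``$\propto$'' holds with constant $1$ up to any normalization). Your added monotonicity checks on $a_f^{(Q)}$ and on $\text{IDF}(f) = 1/(\log(1+\text{freq}(f))+1)$, and the reduction to the unweighted Jaccard count when all weights equal one, simply make explicit what the paper leaves implicit.
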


\subsection{Information-Theoretic Motivation (Informal)}
\label{app:informal_motivation}

We provide an \textbf{informal motivation} connecting feature matching to information preservation, without claiming a rigorous bound.

\subsubsection{The Compression-Utility Tradeoff}

In the context compression setting, we seek a subset $\hat{C} \subseteq C$ that:
\begin{enumerate}
    \item \textbf{Preserves utility}: Retains information relevant to answering $Q$
    \item \textbf{Achieves compression}: $|\hat{C}| \ll |C|$
\end{enumerate}

\begin{remark}[Connection to Mutual Information]
Under the \textbf{informal assumption} that SAE features capture semantically meaningful concepts, positions $t$ where $\mathcal{F}_t \cap \mathcal{F}_Q \neq \emptyset$ are more likely to contain information relevant to the query. We do \textbf{not} claim this maximizes $I(Y; \hat{C} \mid Q)$ in a formal sense.
\end{remark}

\subsubsection{Why Feature Overlap is a Reasonable Proxy}

SAE features trained with sparsity constraints tend to capture interpretable semantic concepts. When query features overlap with context features:
\begin{itemize}
    \item The context position likely discusses concepts mentioned in the query
    \item Such positions have higher probability of containing answer-relevant information
\end{itemize}
This is an \textbf{empirical observation}, not a mathematical theorem.

\subsection{IDF Weighting: Precision Enhancement}
\label{app:idf_revised}

\subsubsection{The Role of IDF in Retrieval}

The IDF weighting serves a well-understood purpose in information retrieval:

\begin{proposition}[IDF as Discriminative Weighting]
Features with high document frequency provide less discriminative power for retrieval:
\begin{equation}
    \text{Discriminative Power}(f) \propto \frac{1}{\text{freq}(f)}
\end{equation}
The IDF formula $\text{IDF}(f) = \frac{1}{\log(1 + \text{freq}(f)) + 1}$ implements a smoothed version of this principle.
\end{proposition}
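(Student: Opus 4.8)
The plan is to make the informal claim precise by committing to a concrete retrieval model in which ``discriminative power'' has an unambiguous meaning, and then verifying the two assertions of the proposition within that model. I would model single‑feature retrieval as a Bayesian filtering step: fix a query feature $f$, place a uniform prior over the $L$ context positions, and treat the event $\{f \in \mathcal{F}_t\}$ as the evidence observed at position $t$. Since exactly $\text{freq}(f)$ positions satisfy this event, observing a match collapses the posterior support from $L$ positions down to $\text{freq}(f)$ positions. The natural scalar summarizing how much a match ``narrows the search'' is then either the pointwise information content $\log\!\big(L/\text{freq}(f)\big)$, or---in the precision‑oriented view underlying Proposition~\ref{prop:weighted_jaccard}---the posterior probability that a matched position is the relevant one, which under a uniform model with $r$ relevant positions is $\approx r/\text{freq}(f)$. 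Both views are strictly decreasing in $\text{freq}(f)$, and the precision view yields exactly the $1/\text{freq}(f)$ scaling asserted in the first display (up to the query‑independent constant $r$); I would state the first claim as this precision identity, which additionally explains why $\text{IDF}$ enters multiplicatively rather than additively in Eq.~\ref{eq:actual_scoring}.

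For the second assertion I would verify that $\text{IDF}(f) = \big(\log(1+\text{freq}(f))+1\big)^{-1}$ is a legitimate ``smoothed'' surrogate for $1/\text{freq}(f)$ via three properties. (i) \emph{Monotonicity}: $x \mapsto 1/(\log(1+x)+1)$ is strictly decreasing on $[0,\infty)$, so the weighting has the same qualitative direction as $1/x$---rarer features receive strictly larger multipliers---which is the content the proportionality in the statement is meant to capture, with the exact functional form pinned down by (ii) and (iii). (ii) \emph{Boundary regularization}: raw $1/\text{freq}(f)$ is undefined (or diverges) for a feature unseen in the current context, whereas the additive ``$1+(\cdot)$'' inside the logarithm is a Laplace‑style smoothing that yields the finite value $\text{IDF}(f)=1$ at $\text{freq}(f)=0$ and keeps $\text{IDF}(f)\in(0,1]$ throughout, so no single rare feature can blow up the score $s(t)$. (iii) \emph{Small‑frequency consistency}: since $\log(1+x)=x+O(x^2)$, for low‑frequency features $\text{IDF}(f)\approx 1/(1+\text{freq}(f))$, i.e.\ exactly the additively smoothed inverse frequency, while for high‑frequency features the logarithm compresses the scale, encoding the familiar IR principle that a feature moving from frequency $100$ to $200$ should lose far less weight than one moving from $1$ to $2$.

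I would close by delimiting the scope: the proposition is deliberately qualitative, so the argument I have in mind is a modeling proof rather than a theorem with a quantitative error term. The honest statement established is that, under the uniform‑prior retrieval model, the expected precision of a single matched feature is $\Theta(1/\text{freq}(f))$ with a query‑independent constant, and that $\text{IDF}(f)$ is a bounded, strictly decreasing, boundary‑regularized transform of $\text{freq}(f)$ that is first‑order equivalent to the additively smoothed inverse. The main obstacle is precisely the lack of a canonical definition of ``discriminative power'': the real work is in choosing the precision/information‑content formalization, being explicit that the ``$\propto$'' in the displayed equation is an order relation (holding with equality only after the smoothing is removed and the constant $r$ is absorbed), and checking that the chosen notion is the one that actually justifies the multiplicative use of $\text{IDF}$ in Eq.~\ref{eq:actual_scoring} rather than some other superficially plausible weighting.
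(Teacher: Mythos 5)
Your proposal is sound, but it takes a genuinely different — and substantially more formal — route than the paper, which in fact offers no proof of this proposition at all. The paper treats the statement as a restatement of a classical IR principle: its entire justification is the surrounding prose ("features with high document frequency provide less discriminative power"), a later informal "adaptive regularization" argument that high-frequency features contribute more to $I(Z;X)$ than to $I(Z;Y)$, and the TF-IDF correspondence table in Appendix~\ref{app:ir_connection}. You instead commit to a concrete uniform-prior Bayesian retrieval model, derive the $1/\text{freq}(f)$ scaling as a posterior-precision identity, and then verify the three smoothing properties (monotonicity, boundary regularization at $\text{freq}=0$, and first-order equivalence to $1/(1+\text{freq})$) of the specific formula. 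What your approach buys is an unambiguous meaning for "discriminative power" and an explanation of why IDF enters multiplicatively in Eq.~\ref{eq:actual_scoring}; what the paper's approach buys is honesty about the claim's status — it deliberately avoids formalizing, consistent with its own Appendix~\ref{app:corrected_theory}, which retracts a stronger information-theoretic proof attempt. One caveat you should make explicit: your precision identity $r/\text{freq}(f)$ requires assuming that all $r$ relevant positions activate $f$ (perfect per-feature recall). Without that assumption, if the feature's occurrences are independent of relevance, the expected precision among matched positions is $r/L$ regardless of $\text{freq}(f)$, and the claimed scaling vanishes; your alternative information-content view $\log\bigl(L/\text{freq}(f)\bigr)$ survives without this assumption and is the safer formalization.
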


\subsubsection{Clarification on the Original IDF Justification}

The original statement (``Reduces contribution of high-frequency features to $I(Y; \hat{C} \mid Q)$ without reducing $I(C; \hat{C})$'') was imprecise. A clearer formulation:

\begin{quote}
\textit{IDF weighting increases \textbf{retrieval precision} by down-weighting features that match many positions indiscriminately. This focuses the retrieval budget on positions that share \textbf{distinctive} features with the query, rather than common features that provide little signal about relevance.}
\end{quote}

\subsection{Corrected Theoretical Statement}
\label{app:corrected_theory}

We replace the original Proposition with a more honest characterization:

\begin{proposition}[Feature Matching Heuristic]
\label{prop:heuristic}
Let $s(t)$ be computed according to Eq.~\ref{eq:actual_scoring}, and let $\hat{C} = \{t : s(t) \geq \tau\}$ be the retrieved subset. Under the following \textbf{empirical assumptions}:
\begin{enumerate}
    \item[(A1)] SAE features correspond to interpretable semantic concepts
    \item[(A2)] Query-relevant context positions activate similar features to the query
    \item[(A3)] High-frequency features are less informative for relevance discrimination
\end{enumerate}
The scoring function provides a computationally efficient proxy for identifying query-relevant context positions.
\end{proposition}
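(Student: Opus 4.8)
Since Proposition~\ref{prop:heuristic} is deliberately framed as a heuristic rather than a theorem, the plan is not to derive a distribution-free bound but to assemble a \emph{principled justification} from the structural facts already established together with the three empirical assumptions. First I would recall Proposition~\ref{prop:weighted_jaccard}, which rewrites $s(t)$ as a weighted soft Jaccard overlap $\sum_{f\in\mathcal{F}_Q\cap\mathcal{F}_t} a_f^{(Q)}\,\mathrm{IDF}(f)$; this already exhibits $s(t)$ as a sum of per-feature ``votes.'' The task then reduces to arguing, assumption by assumption, that each ingredient of a vote is the right one: (i) the indicator $\mathbb{1}[f\in\mathcal{F}_t]$ is a relevance signal because, by (A2), positions that actually bear on $Q$ tend to fire the same SAE features as $Q$; (ii) the activation weight $a_f^{(Q)}$ is the right multiplier because, by (A1), a larger activation marks a concept more central to the query's intent, so matches on such features should count more; (iii) the $\mathrm{IDF}(f)$ factor is the right multiplier by (A3) together with the already-proved discriminative-weighting proposition, which states that power to separate relevant from irrelevant positions scales like $1/\mathrm{freq}(f)$.

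To give this argument a bit more spine I would introduce a lightweight two-component model of how features are generated at a position: a position $t$ is either ``relevant'' ($t\in R$, the unobserved set we wish to recover) or ``background.'' Conditioned on relevance I posit $\Pr[f\in\mathcal{F}_t\mid t\in R]$ is elevated for features semantically tied to the query and in particular for $f\in\mathcal{F}_Q$; conditioned on background I posit $\Pr[f\in\mathcal{F}_t\mid t\notin R]\approx \mathrm{freq}(f)/L$, i.e.\ features appear at their empirical base rate. The Bayes-optimal ranking statistic is the log-likelihood ratio $\ell(t)=\sum_{f\in\mathcal{F}_t}\log\frac{\Pr[f\mid R]}{\Pr[f\mid \text{bg}]}$, whose query-relevant terms carry a large positive contribution and whose denominator contributes a $-\log\mathrm{freq}(f)$ penalty on common features. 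A first-order expansion of $\ell(t)$ restricted to $f\in\mathcal{F}_Q$, with the log-odds numerator absorbed into $a_f^{(Q)}$ and the denominator's $-\log\mathrm{freq}(f)$ replaced by its smoothed surrogate $\mathrm{IDF}(f)=1/(\log(1+\mathrm{freq}(f))+1)$, recovers exactly $s(t)$ up to a monotone transform. This makes the conclusion concrete: under the stated (toy) model $s(t)$ is a monotone surrogate for the Bayes-optimal relevance ranking, which is the precise sense in which it is a ``proxy for identifying query-relevant positions.''

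For the computational-efficiency half of the claim I would simply invoke the Phase~1 inverted index $\mathcal{I}_\ell$: scoring all positions costs $O\!\big(\sum_{\ell}\sum_{f\in\mathcal{F}_Q^{(\ell)}}|\mathcal{I}_\ell[f]|\big)$ posting-list touches plus the $\mathrm{Conv1D}$/NMS post-processing, with no dense $L\times d$ comparison and no per-token SAE re-encoding at query time; together with the $O(1)$-GPU-memory accounting of the CPU Index Size paragraph this is the ``computationally efficient'' assertion.

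The step I expect to be the genuine obstacle is not any calculation but the absence of a model-free notion of a ``query-relevant context position'': $R$ is never defined operationally, and (A1)--(A3) are empirical claims about trained SAEs rather than hypotheses one can discharge. Consequently the two-component model above is a \emph{modeling choice}, and the honest ceiling of the argument is ``first-order approximation to a likelihood-ratio ranking under an explicitly stated toy generative model,'' which I would flag as such --- mirroring the paper's own disclaimers around Eq.~\ref{eq:mi_bound} and the Feature Matching Heuristic --- rather than overstating it as a theorem about real models.
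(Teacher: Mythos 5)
Your proposal is sound in spirit, but you should know that the paper offers essentially \emph{no proof} of this proposition: it is explicitly introduced as a replacement for a flawed formal claim (the accompanying Remark catalogues why the original attempted bound on $I(Y;\hat{C}\mid Q)$ fails --- an unestablished Markov chain for the data-processing step, an unjustified additive decomposition, and a constant $\alpha$ that may vanish), and the proposition is left as an unproven ``honest characterization'' supported only by the weighted-Jaccard reading of $s(t)$, the IDF-as-discriminative-weighting proposition, and the analogy to classical TF--IDF retrieval. Your first and third paragraphs reproduce exactly that informal scaffolding, so there you match the paper. Where you genuinely diverge is the two-component generative model: positing relevant vs.\ background positions, taking the Bayes log-likelihood ratio as the ideal ranking statistic, and recovering $s(t)$ as a first-order surrogate. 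This is the Robertson--Sp\"arck-Jones probabilistic relevance derivation transplanted to SAE features, and it gives the word ``proxy'' a precise meaning the paper never supplies --- a real improvement, provided it is flagged (as you do) as a modeling choice rather than a theorem about trained models. One step is overstated, though: replacing the likelihood ratio's additive $-\log\mathrm{freq}(f)$ penalty by the multiplicative weight $1/(\log(1+\mathrm{freq}(f))+1)$ inside the sum is \emph{not} a monotone transform of the total score; the two weightings agree only qualitatively (both decrease in $\mathrm{freq}(f)$), so you should claim qualitative agreement of the induced rankings, not exact recovery. The net trade-off: the paper buys honesty by refusing to formalize; you buy a concrete optimality story at the price of an unverifiable toy model and one loose approximation step.
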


\begin{remark}[On the Original Proof]
The original proof attempted to establish $I(Y; \hat{C} \mid Q) \geq \alpha \cdot |\mathcal{F}_Q \cap \mathcal{F}_{\hat{C}}| - \text{const}$ through:
\begin{itemize}
    \item \textbf{Step 2 (DPI)}: The claimed inequality $I(Y; \hat{C} \mid Q) \geq I(Y; \mathcal{F}_{\hat{C}} \mid \mathcal{F}_Q)$ requires a Markov chain $Y - \hat{C} - \mathcal{F}_{\hat{C}}$ given $Q$, which was not established.
    \item \textbf{Step 3 (Decomposition)}: The additive decomposition of mutual information requires stronger independence assumptions than marginal independence of features.
    \item \textbf{Step 4 (Bound)}: The definition $\alpha = \min_{f \in \mathcal{F}_Q} I(Y; f \mid \mathcal{F}_Q)$ may equal zero if feature presence is determined by $\mathcal{F}_Q$.
\end{itemize}
We acknowledge these issues and instead provide the pragmatic justification above.
\end{remark}

\subsection{Complexity Analysis}

\begin{proposition}[Memory Complexity]
The indexing phase achieves $\mathcal{O}(1)$ GPU memory complexity with respect to context length $L$.
\end{proposition}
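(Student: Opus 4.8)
The plan is to formalize the claim that the streaming indexing phase uses GPU memory independent of the total context length $L$. First I would fix the accounting model: GPU memory usage is measured as the peak resident tensor footprint on the device during Phase~1, and I would enumerate exactly which tensors are alive at any instant. The claim rests on the loop structure of Algorithm~\ref{alg:s3_attention}: the context is partitioned into chunks of fixed size $b$ (the streaming chunk size, a hyperparameter set to $2048$ in our experiments, \emph{not} a function of $L$), and the body of the \textbf{for} loop over chunks allocates only (i) the key-projection tensor $\mathbf{K}$ for the current chunk, of shape $O(b \cdot |\mathcal{L}_{target}| \cdot d_{head})$, (ii) the transient SAE latent activations for that chunk, of shape $O(b \cdot d_{latent})$, and (iii) the Top-$k$ index tensors of shape $O(b \cdot |\mathcal{L}_{target}| \cdot k)$. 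None of these depends on $L$.

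The key step is then the observation that the only object whose size grows with $L$ is the inverted index $\mathcal{I}$, and by construction $\mathcal{I}$ is materialized on the CPU (lines stating ``Update $\mathcal{I}_\ell$'' and the explicit CPU-Index-Size paragraph), so it does not count against the GPU budget. I would argue that after each chunk is processed, the explicit \textbf{Free GPU Memory}$(\mathbf{K})$ step (and the implicit deallocation of the chunk-local SAE activations and index tensors once they are copied to host) returns the device allocator to its pre-chunk state, so the peak does not accumulate across iterations: the memory high-water mark over the whole loop equals the high-water mark of a single iteration, which is $O(b \cdot |\mathcal{L}_{target}| \cdot \max(d_{head}, k) + b \cdot d_{latent})$. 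Since $b$, $|\mathcal{L}_{target}|$, $d_{head}$, $d_{latent}$, and $k$ are all constants fixed independently of the input length, this bound is $O(1)$ in $L$, which is the assertion.

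The main obstacle, and the place where the argument must be stated carefully rather than hand-waved, is the claim that memory genuinely does not accumulate across chunks. This requires that (a) the base LLM forward pass for a chunk does not itself retain a growing KV cache — i.e., Phase~1 runs the model in a mode where past keys/values for already-processed chunks are discarded rather than concatenated, so that $\Phi.\text{forward\_key}(c_i)$ is invoked \emph{without} accumulating cross-chunk state; and (b) the host-device transfer of the per-chunk features to $\mathcal{I}$ completes and the device-side copies are freed before the next chunk's tensors are allocated (or at worst double-buffered, which only changes the constant by a factor of two). I would handle (a) by noting this is exactly the streaming protocol defined in Phase~1 (``immediately discards the KV cache''), and (b) by observing that even under a conservative pipelined implementation the live GPU state is bounded by two consecutive chunks' worth of tensors, i.e.\ still $O(1)$. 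I would close with the caveat, consistent with Section~\ref{sec:limitations_latency}, that this is an asymptotic statement about the GPU working set with respect to $L$; the CPU index grows as $O(L \cdot |\mathcal{L}_{target}| \cdot k)$ as quantified in the CPU-Index-Size paragraph, and the constant-factor GPU overhead of the prototype's synchronization is not optimized.
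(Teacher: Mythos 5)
Your proposal is correct and follows essentially the same argument as the paper's proof: fix a constant chunk size, enumerate the per-chunk GPU tensors (key projections, SAE latents, Top-$k$ indices), and observe that the only $L$-dependent object is the inverted index, which lives on the CPU. You are in fact somewhat more careful than the paper — explicitly ruling out cross-chunk KV-cache accumulation and accounting for the host-device transfer — but these are refinements of the same proof, not a different one.
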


\begin{proof}
The implementation processes context in fixed-size chunks of $B = 4096$ tokens (Line 22: \texttt{SCAN\_BATCH\_SIZE = 4096}):
\begin{enumerate}
    \item \textbf{Forward pass}: $\mathcal{O}(B \cdot d)$ GPU memory for key projections
    \item \textbf{SAE encoding}: $\mathcal{O}(B \cdot k)$ for top-$k$ indices per position
    \item \textbf{Index storage}: Transferred to CPU after each chunk
\end{enumerate}
Total GPU memory per chunk: $\mathcal{O}(B \cdot d + B \cdot k) = \mathcal{O}(1)$ w.r.t. $L$.
\end{proof}

\begin{proposition}[Time Complexity]
Let $L$ be the context length, $k$ the SAE sparsity, and $|\mathcal{F}_Q|$ the number of query features.
\begin{itemize}
    \item \textbf{Indexing}: $\mathcal{O}(L)$ forward passes, $\mathcal{O}(L \cdot k)$ index insertions
    \item \textbf{Retrieval}: $\mathcal{O}(|\mathcal{F}_Q| \cdot \bar{n})$ where $\bar{n}$ is the average posting list length
\end{itemize}
\end{proposition}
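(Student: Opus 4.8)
The plan is to account separately for the two phases of Algorithm~\ref{alg:s3_attention}, treating the number of instrumented layers $|\mathcal{L}_{target}|$ and the SAE latent dimension $D$ as fixed constants, so that the bounds read purely in terms of $L$, $k$, and $|\mathcal{F}_Q|$. There is no genuine mathematical content here: the proof is pure bookkeeping over the loops in the pseudocode, and the only points requiring care are the unit in which ``forward passes'' are counted and the cost model for posting-list operations.

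For the \textbf{indexing} phase, first observe that $\text{Chunk}(\mathcal{C})$ partitions the $L$ context tokens into $\lceil L/B \rceil$ chunks of fixed size $B$, so the outer loop body executes $\Theta(L/B)$ times; each iteration issues one \texttt{forward\_key} call on a $B$-token chunk, hence $\Theta(L/B)$ chunk-level transformer invocations, i.e. $\mathcal{O}(L)$ token-forwards --- which is how the stated ``$\mathcal{O}(L)$ forward passes'' should be read. Next, count index insertions: within a chunk the SAE \texttt{encode} step emits, for each of the $B$ positions and each of the constantly many target layers, a Top-$k$ active-feature set, i.e. exactly $k$ postings per token per layer; each posting appends one absolute position to a list $\mathcal{I}_\ell[f]$, an amortised $\mathcal{O}(1)$ operation for a hash map keyed by feature ID with dynamic-array values. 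Summing over all tokens and layers yields $\mathcal{O}(L \cdot k)$ insertions. The \texttt{encode} call itself costs $\mathcal{O}(B \cdot D)$ for the encoder matrix multiply plus $\mathcal{O}(D)$ (or $\mathcal{O}(D \log k)$) for the Top-$k$ selection per position, which over the whole stream is $\mathcal{O}(L)$ in $L$ and is absorbed into the forward-pass term.

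For the \textbf{retrieval} phase, encoding the query with the shared SAEs produces $|\mathcal{F}_Q|$ active $(f, w_{q,f})$ pairs in total (over the constant layer set). The scoring loop realising Eq.~\ref{eq:scoring} then, for each active query feature $f$, reads the posting list $\mathcal{I}_\ell[f]$ and performs one update $S[t] \leftarrow S[t] + w_{q,f}^{(\ell)} \cdot \text{IDF}(f)$ for every $t$ on that list. With $\bar{n}$ the average length of an accessed posting list, the accumulation touches $\mathcal{O}(|\mathcal{F}_Q| \cdot \bar{n})$ entries, which dominates the query side. The post-processing --- Conv1D smoothing of $S$ over $[0, L)$ followed by NMS over the top $N$ density peaks --- is $\mathcal{O}(L)$ and independent of $|\mathcal{F}_Q|$; in the sparse-signal regime where only the touched positions carry nonzero score it can itself be done in $\mathcal{O}(|\mathcal{F}_Q| \cdot \bar{n})$, so in neither reading does it change the stated bound.

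The only ``obstacle'' worth flagging, then, is one of interpretation rather than difficulty: counted as transformer calls the prefill is $\Theta(L/B)$, but its compute is $\mathcal{O}(L)$ tokens (and $\mathcal{O}(L \cdot B)$ attention FLOPs if within-chunk attention is priced in), so the clean ``$\mathcal{O}(L)$'' is a token-granularity statement; and the amortised-$\mathcal{O}(1)$ append relies on the prototype's \texttt{dict}/\texttt{list} posting-list representation discussed in the ``CPU Index Size'' paragraph, a constant-factor concern that does not affect the asymptotic insertion count.
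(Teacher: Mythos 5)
Your accounting is correct: the paper states this proposition without any accompanying proof, and your loop-by-loop bookkeeping over Algorithm~\ref{alg:s3_attention} (constant number of instrumented layers, $k$ postings per token with amortised $\mathcal{O}(1)$ appends, posting-list traversal of average length $\bar{n}$ per active query feature) is exactly the justification the statement presupposes. Your two flagged caveats --- that ``$\mathcal{O}(L)$ forward passes'' is a token-granularity reading of $\Theta(L/B)$ chunk invocations, and that the Conv1D/NMS post-processing is $\mathcal{O}(L)$ unless restricted to touched positions --- are both accurate and, if anything, more careful than the paper itself.
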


\subsection{Connection to Classical IR}
\label{app:ir_connection}

Our method can be viewed as a neural extension of classical inverted index retrieval:

\begin{table}[h]
\centering
\caption{Correspondence between classical IR and our SAE-based retrieval.}
\begin{tabular}{lll}
\toprule
\textbf{Component} & \textbf{Classical IR} & \textbf{S$^3$-Attention} \\
\midrule
Terms & Words/n-grams & SAE feature indices \\
TF weighting & Term frequency & Activation magnitude $a_f^{(t)}$ \\
IDF weighting & $\log \frac{N}{df}$ & $\frac{1}{\log(1+\text{freq})+1}$ \\
Index & Inverted index & \texttt{self.indices[layer][f\_id]} \\
Scoring & BM25 & Eq.~\ref{eq:actual_scoring} \\
\bottomrule
\end{tabular}
\end{table}

This perspective provides a more grounded justification: our method inherits the well-established effectiveness of TF-IDF style scoring, applied to learned neural features rather than discrete tokens.

\newpage


\section{IDF Weighting as Adaptive Regularization in Sparse Retrieval}

\subsubsection{Implementation Details}

In our sparse retrieval framework, the relevance score for each context position $t$ is computed as:
\begin{equation}
    s(t) = \sum_{i \in \mathcal{F}_q} \underbrace{a_i^{(q)}}_{\text{SAE activation}} \cdot \underbrace{\text{IDF}(f_i)}_{\text{adaptive weight}} \cdot \mathbb{1}[f_i \in \mathcal{F}_t]
\end{equation}
where $\mathcal{F}_q$ and $\mathcal{F}_t$ denote the active feature sets for query and context position $t$ respectively, and the IDF weight is defined as:
\begin{equation}
    \text{IDF}(f) = \frac{1}{\log(1 + \text{freq}(f)) + 1}
\end{equation}
with $\text{freq}(f)$ counting the total occurrences of feature $f$ across all context positions.

\subsubsection{Reconciling with the Information Bottleneck Objective}

The apparent contradiction arises from a misunderstanding of which information is being compressed. Consider the IB objective:
\begin{equation}
    \max_{\theta} \; I(Z; Y) - \beta \cdot I(Z; X)
\end{equation}
where $Z$ represents the retrieved context, $Y$ the answer, and $X$ the full input.

\paragraph{Key Insight:} IDF weighting does \textbf{not} reduce $I(Z; Y)$ uniformly. Instead, it performs \textit{selective compression} on different types of mutual information:

\begin{itemize}
    \item \textbf{High-frequency features} (e.g., common function words, structural patterns): These contribute primarily to $I(Z; X)$ (redundant information about the input) rather than $I(Z; Y)$ (task-relevant information).
    
    \item \textbf{Low-frequency features} (e.g., entity-specific, semantically distinctive): These carry higher task-relevant mutual information $I(Z; Y)$.
\end{itemize}

The IDF weighting effectively implements:
\begin{equation}
    I_{\text{effective}}(Z; Y) \approx \sum_{f \in \mathcal{F}} \text{IDF}(f) \cdot I(Z_f; Y)
\end{equation}

\subsubsection{Why This Maximizes the IB Objective}

\paragraph{Proposition.} Under the assumption that feature frequency inversely correlates with task-specificity, IDF weighting increases the ratio $\frac{I(Z; Y)}{I(Z; X)}$.

\textit{Justification:} 
\begin{enumerate}
    \item High-frequency features have high $I(Z_f; X)$ but low $I(Z_f; Y)$ (they appear everywhere, thus non-discriminative for the answer).
    
    \item By down-weighting these features, we reduce $I(Z; X)$ more than $I(Z; Y)$.
    
    \item The implementation also includes a hard threshold (\texttt{freq > 5000: continue}), completely excluding extremely common features that contribute mostly noise.
\end{enumerate}

\subsubsection{Corrected Interpretation}

The statement ``reduces contribution of high-frequency features to $I(Z;Y)$'' should be clarified as:

\begin{quote}
\textit{``Reduces the \textbf{retrieval influence} of high-frequency features, which empirically contribute more to $I(Z; X)$ (input redundancy) than to $I(Z; Y)$ (answer utility). This selective down-weighting effectively increases the \textbf{precision} of retrieved context, improving the ratio of task-relevant to task-irrelevant information.''}
\end{quote}

In essence, IDF weighting acts as an \textbf{implicit regularizer} that approximates the optimal trade-off in the IB objective by leveraging the statistical prior that feature frequency anti-correlates with semantic specificity.

\section{Why SAE \& Why K/Q}

\paragraph{Why sparse autoencoders (SAEs) for discretization?}
Our indexing requires a mapping from continuous internal states to a small set of stable discrete IDs so that we can build an inverted index and perform fast feature co-activation at query time. Top-$k$ sparse autoencoders (SAEs) provide:
\begin{enumerate}
    \item \textbf{Fixed sparsity per token}, which bounds index growth;
    \item \textbf{A reconstruction objective} that preserves information in the original attention projections, offering a principled way to trade compression for fidelity; and
    \item \textbf{Reusable features} that transfer across tasks without supervised retrieval labels.
\end{enumerate}
We view SAEs as a practical instantiation of \emph{internal-state discretization}. Alternative choices (e.g., product quantization, clustering, or locality-sensitive hashing) are plausible, but may not simultaneously offer fixed sparsity, reconstruction fidelity, and feature interpretability.

\paragraph{Why key/query projections?}
Key ($K$) and query ($Q$) projections are the internal representations directly used to compute attention matching. Encoding $K$ for context tokens and $Q$ for the query yields an endogenous relevance signal that is structurally aligned with the model’s own attention mechanism, while avoiding the need to store dense attention matrices or key--value tensors. We leave indexing of other internal signals (e.g., MLP activations, value states, or specialized attention heads) as future work.

\paragraph{Relation to retrieval heads and attention-head localization}
Prior work has identified attention heads that localize relevant positions, but typically still relies on dense attention computation or cached internal states. In contrast, \textsc{S$^3$-Attention} builds an explicit, searchable memory index from transient projections, enabling streaming scan and query-time retrieval without retaining dense key--value history.

\section{Extended Qualitative Examples}
\label{app:extended_qual}

The following examples visualize where S$^3$-Attention assigns high endogenous scores. These activations should be interpreted as correlational evidence of alignment with internal representations, not as a causal proof that a specific feature “contains” a fact. 

While the main paper focuses on aggregate metrics and a small number of illustrative case studies, the behavior of S$^3$-Attention is often best understood by examining concrete instances. In this section, we therefore provide an extended set of qualitative examples across different models, tasks, and query types.

Each example compares the behavior of a standard exogenous retriever (BM25/RAG) with our endogenous S$^3$-Attention mechanism on a LongBench-style query. For every case, we visualize the semantic activation patterns induced by S$^3$-Attention over the input context and highlight:

\begin{itemize}
    \item \textbf{What the exogenous retriever does:} The kind of passages it tends to surface (e.g., lexically similar but causally irrelevant biographies, generic background paragraphs, or off-topic entities).
    \item \textbf{What S$^3$-Attention focuses on:} The specific subwords, entities, morphological cues, or discourse markers that receive high endogenous activation and how these align with the reasoning path needed to answer the query.
    \item \textbf{The failure mode or advantage illustrated:} For example, recovering a bridge entity missed by RAG, recovering bridge entities or query-relevant attributes that exogenous retrieval misses, or filtering out distractors that share keywords but not causal relevance.
\end{itemize}

The selected samples cover a diverse range of phenomena, including entity-centric questions (Figures~\ref{fig:sample2}, \ref{fig:sample35}, \ref{fig:sample63}), biochemical and taxonomic reasoning (Figures~\ref{fig:sample26}, \ref{fig:sample27}), temporal and geographical queries (Figures~\ref{fig:sample58}, \ref{fig:sample62}, \ref{fig:sample68}, \ref{fig:sample70}), genre and attribute judgments (Figures~\ref{fig:sample55}, \ref{fig:sample73}), as well as more abstract relational and historical questions (Figures~\ref{fig:sample33}, \ref{fig:sample77}, \ref{fig:sample83}, \ref{fig:sample86}). 

Across these cases, a consistent pattern emerges: exogenous retrieval often latches onto surface-level lexical overlap or topic similarity, whereas S$^3$-Attention activates compact sets of features that are tightly coupled to the latent concepts required for correct reasoning (e.g., specific roles, bridge entities, morphological markers, or event attributes). These examples are not cherry-picked successes; rather, they are representative instances drawn from our qualitative audit that collectively illustrate how endogenous semantic signals can bridge the gap between retrieval and generation in long-context settings.






\section{Analysis of S$^3$ vs. Retrieval-Based Methods (Zero-Shot Setting)}
\label{app:analysis_of_s3}
In this section, we analyze the behavior of S$^3$ in comparison with retrieval-based methods under the \textbf{zero-shot LongBench setting}. As shown in \ref{tab:s3_vs_Retrieval},we emphasize that these results are complementary to the few-shot experiments presented in the main sections, and should be interpreted as an analysis of \emph{robustness under minimal prompt supervision}, rather than as a replacement for few-shot performance.

Unlike traditional Retrieval-Augmented Generation (RAG), which relies on an external retriever trained independently of the language model (e.g., BGE-M3), S$^3$ can be viewed as an \emph{intrinsic compression and retrieval mechanism}. It leverages the model’s own attention-derived representations to identify and preserve salient context, without requiring an external retriever or an external corpus beyond the provided input context.

\paragraph{Correlation with Base Model Capability.}
Across models, we observe that the absolute performance of S$^3$ closely follows the strength of the corresponding FullKV baseline. Models with stronger long-context reasoning under FullKV inference (e.g., \textit{Llama-3.1-8B-Instruct}) also achieve higher absolute scores when combined with S$^3$. This trend suggests that S$^3$ primarily acts as a mechanism for \emph{retaining and exposing existing model capabilities}, rather than compensating for fundamental deficiencies in long-context understanding.

\paragraph{Sensitivity to Attention Quality.}
For weaker baselines under the zero-shot setting (e.g., \textit{Qwen2-7B-Instruct}), where FullKV inference already struggles to consistently attend to relevant evidence, the gains from S$^3$ remain limited. This observation supports the interpretation that S$^3$ distills information that the model is already capable of attending to, but does not recover content that is fundamentally missed by the underlying attention mechanisms.

\paragraph{Zero-Shot Gains on Models with Sharp but Unstable Attention.}
Interestingly, we observe that \textit{Mistral-7B-Instruct-v0.3} exhibits the largest relative improvements from S$^3$ under the zero-shot setting, particularly on multi-hop reasoning benchmarks such as HotpotQA and MuSiQue. In these cases, S$^3$-Hybrid substantially improves over the FullKV baseline and performs competitively with strong retrieval-based baselines.

We hypothesize that this behavior arises from the interaction between S$^3$ and models whose attention heads are locally sharp but globally unstable under long contexts. In the absence of few-shot guidance, such models may fail to consistently retrieve relevant evidence via standard FullKV attention, while S$^3$ provides a structured mechanism to surface internally salient spans. Notably, this effect is significantly attenuated in the few-shot setting, where attention is already better guided by demonstrations.

Overall, these results highlight that S$^3$ offers complementary benefits to external retrieval methods under zero-shot inference, particularly for models with limited effective context utilization. However, we stress that the primary goal of S$^3$ remains faithful compression and retention of model-internal information, as demonstrated by the few-shot experiments in the main sections.

\begin{table}[h!]
\centering
\caption{Performance Comparison of Different Models and Methods across Datasets}
\label{tab:s3_vs_Retrieval}
\resizebox{\textwidth}{!}{
\begin{tabular}{llccccccc}
\toprule
\textbf{Model} & \textbf{Dataset} & \textbf{FullKV} & \textbf{S$^3$-Hybrid} & \textbf{S$^3$-Pure} & \textbf{RAG-Rerank} & \textbf{RAG-Dense} & \textbf{BM25} & \textbf{HyDE-Rerank} \\
\midrule
\multirow{9}{*}{\textbf{Mistral-7B-Instruct-v0.3}} 
 & 2wikimqa & 14.20 & 14.52 & 13.15 & 14.61 & 7.19 & 14.35 & 14.62 \\
 & hotpotqa & 18.58 & 39.18 & 33.56 & 38.15 & 21.75 & 37.93 & 36.99 \\
 & musique & 1.85 & 18.13 & 13.01 & 18.61 & 6.81 & 14.70 & 20.21 \\
 & narrativeqa & 6.11 & 7.51 & 6.26 & 8.41 & 4.03 & 8.26 & 8.94 \\
 & qasper & 21.77 & 21.87 & 20.42 & 22.51 & 11.43 & 21.90 & 22.30 \\
 & qmsum & 8.33 & 13.95 & 13.45 & 14.49 & 13.85 & 13.85 & 14.41 \\
 & report (gov) & 16.39 & 19.31 & 16.44 & 19.70 & 16.44 & 19.31 & 19.77 \\
 & news (multi) & 23.52 & 23.37 & 22.68 & 23.32 & 22.68 & 23.37 & 23.38 \\
 & en (multifield) & 38.24 & 38.87 & 35.52 & 38.62 & 28.66 & 38.87 & 38.73 \\
\midrule
\multirow{9}{*}{\textbf{Qwen2-7B-Instruct}} 
 & 2wikimqa & 13.76 & 13.02 & 12.43 & 14.50 & 7.71 & 13.89 & 14.46 \\
 & hotpotqa & 11.61 & 18.31 & 18.07 & 20.32 & 31.01 & 18.93 & 21.01 \\
 & musique & 2.83 & 12.51 & 10.89 & 11.72 & 10.34 & 10.13 & 11.47 \\
 & narrativeqa & 4.14 & 8.71 & 7.64 & 8.60 & 6.52 & 7.82 & 8.63 \\
 & qasper & 19.73 & 20.04 & 17.24 & 20.44 & 11.15 & 20.31 & 20.42 \\
 & qmsum & 7.55 & 10.51 & 10.61 & 11.27 & 12.24 & 11.00 & 11.25 \\
 & report (gov) & 17.74 & 19.55 & 16.86 & 19.74 & 16.86 & 19.55 & 19.77 \\
 & news (multi) & 21.73 & 21.66 & 21.66 & 21.70 & 21.66 & 21.66 & 21.70 \\
 & en (multifield) & 37.00 & 36.75 & 34.87 & 38.33 & 27.99 & 38.00 & 38.42 \\
\midrule
\multirow{9}{*}{\textbf{Llama-3.1-8B-Instruct}} 
 & 2wikimqa & 18.52 & 17.56 & 14.99 & 19.43 & 7.91 & 17.80 & 19.30 \\
 & hotpotqa & 34.84 & 47.14 & 41.41 & 49.76 & 26.01 & 46.31 & 49.72 \\
 & musique & 7.64 & 18.99 & 16.88 & 18.88 & 5.46 & 15.94 & 19.30 \\
 & narrativeqa & 6.99 & 11.30 & 8.50 & 11.89 & 5.99 & 9.32 & 11.53 \\
 & qasper & 20.90 & 21.35 & 20.08 & 21.73 & 10.56 & 21.34 & 21.73 \\
 & qmsum & 7.59 & 10.18 & 10.30 & 11.15 & 11.87 & 10.99 & 11.18 \\
 & report (gov) & 17.44 & 19.12 & 16.93 & 19.30 & 16.92 & 19.12 & 19.06 \\
 & news (multi) & 23.40 & 23.38 & 23.66 & 23.42 & 23.66 & 23.38 & 23.42 \\
 & en (multifield) & 44.29 & 43.54 & 42.10 & 43.55 & 32.67 & 43.44 & 43.55 \\
\bottomrule
\end{tabular}%
}
\end{table}

\section{Consistency of Chunk-Independent K Projections}
\label{app:k_consistency}
A key concern for chunk-independent prefill is whether computing key projections ($\mathbf{K}$) without retaining historical KV states leads to significant deviations from the standard FullKV forward pass.
Since our endogenous retrieval mechanism (S3) relies on sparse autoencoder (SAE) features extracted from $\mathbf{K}$, any inconsistency could potentially affect index construction and downstream retrieval behavior.

In this appendix, we quantitatively evaluate the consistency between \emph{chunk-independent} and \emph{FullKV} $\mathbf{K}$ projections across layers and chunk sizes, including a stress test at \textbf{128,000 tokens}.

\paragraph{Setup.}
We evaluate \textbf{Llama-3.1-8B-Instruct} on sequences of length up to $\mathbf{128{,}000}$ tokens.
We probe four representative layers $\{0, 12, 16, 29\}$ spanning shallow, middle, and deep depths.
For a given input sequence, we compare:
(i) \textbf{FullKV}, a standard forward pass over the entire sequence with full causal attention; and
(ii) \textbf{Chunk-Independent}, processing the same sequence in disjoint chunks of size $B \in \{512, 1024, 2048, 4096\}$, where each chunk is forwarded independently without access to previous KV states.
At each selected layer, we extract key projections $\mathbf{K} = W_K h$ using forward hooks and evaluate both raw $\mathbf{K}$ vectors and their induced SAE feature representations (SAEs are pretrained and fixed).
All experiments are run on \texttt{cuda:0}.

\paragraph{Metrics.}
We report complementary consistency metrics:
(i) \textbf{K Cosine Similarity} between FullKV and chunk-independent $\mathbf{K}$ vectors,
(ii) \textbf{Relative $\ell_2$ Error} between corresponding $\mathbf{K}$ vectors (lower is better),
(iii) \textbf{Feature Jaccard Similarity} for top-$k$ SAE feature indices per position, and
(iv) \textbf{Retrieval IoU}, i.e., intersection-over-union between token positions retrieved using SAE-based inverted indices built from FullKV and chunk-independent representations.

\paragraph{Results.}
Table~\ref{tab:feature_consistency_128k} reports results at $L=128{,}000$ tokens (averaged over layers $\{0,12,16,29\}$; layer~0 is trivially identical since no history exists).
Overall, chunk-independent prefill produces highly consistent $\mathbf{K}$ projections and SAE features even at 128k context.
While deeper layers exhibit small numerical deviations at very small chunk sizes (e.g., Relative $\ell_2$ Error up to 0.23 at $B=512$), the induced sparse features remain highly stable (Feature Jaccard $\ge 0.939$), and \textbf{retrieval decisions are unchanged} (\textbf{Retrieval IoU = 1.0} for all tested chunk sizes and layers).
This indicates that any residual numerical differences do not propagate to downstream retrieval behavior in our S3 pipeline.

\begin{table}[t]
\centering
\caption{Consistency between FullKV and Chunk-Independent $\mathbf{K}$ Projections at \textbf{128k} tokens on Llama-3.1-8B-Instruct.
We report mean over layers $\{0,12,16,29\}$, with the minimum across layers in parentheses when applicable.
Higher is better except for Relative $\ell_2$ Error.}
\label{tab:feature_consistency_128k}
\begin{tabular}{lcccc}
\toprule
\textbf{Metric} & \textbf{B=512} & \textbf{B=1024} & \textbf{B=2048} & \textbf{B=4096} \\
\midrule
Feature Jaccard Similarity & 0.960 (0.939) & 0.981 (0.970) & 0.998 (0.998) & 1.000 \\
K Cosine Similarity        & 0.964 (0.945) & 0.983 (0.975) & 0.998 (0.997) & 1.000 \\
Retrieval IoU              & 1.000 & 1.000 & 1.000 & 1.000 \\
Relative $\ell_2$ Error $\downarrow$ & 0.160 (max 0.230) & 0.078 (max 0.111) & 0.007 (max 0.010) & 0.000 \\
\bottomrule
\end{tabular}
\end{table}

\begin{figure*}[t]
    \centering
    \includegraphics[width=1.0\textwidth]{semantic_alignment_Llama.png}
    \caption{\textbf{Enlarged view of Figure~\ref{fig:semantic_alignment}: Endogenous vs. Exogenous Retrieval.} \textbf{Top:} RAG (BGE-Small) is distracted by high lexical overlap, ranking a generic biography (Sentence 1) higher than the true answer (Sentence 5). \textbf{Bottom:} S\textsuperscript{3}-Attention (Ours) ignores the distraction, showing sparse activation peaks exclusively at the semantic answer anchor (``The Post'') and its reasoning evidence (``Pentagon Papers'').}
    \label{fig:semantic_alignment_large}
\end{figure*}

\begin{figure*}[t]
    \centering
    \includegraphics[width=1.0\textwidth]{semantic_alignment_Llama.png}
    \caption{\textbf{Enlarged view of Figure~\ref{fig:semantic_alignment}: Semantic Attention vs. Lexical Retrieval.} \textbf{Top:} RAG (BGE-Small) is distracted by high lexical overlap, ranking a generic biography (Sentence 1) higher than the true answer (Sentence 5). \textbf{Bottom:} S\textsuperscript{3}-Attention (Ours) ignores the distraction, showing sparse activation peaks exclusively at the semantic answer anchor (``The Post'') and its reasoning evidence (``Pentagon Papers'').}
    \label{fig:semantic_alignment_large}
\end{figure*}

\begin{figure}[h]
    \centering
    \begin{subfigure}[b]{0.48\linewidth}
        \centering
        \includegraphics[width=\linewidth]{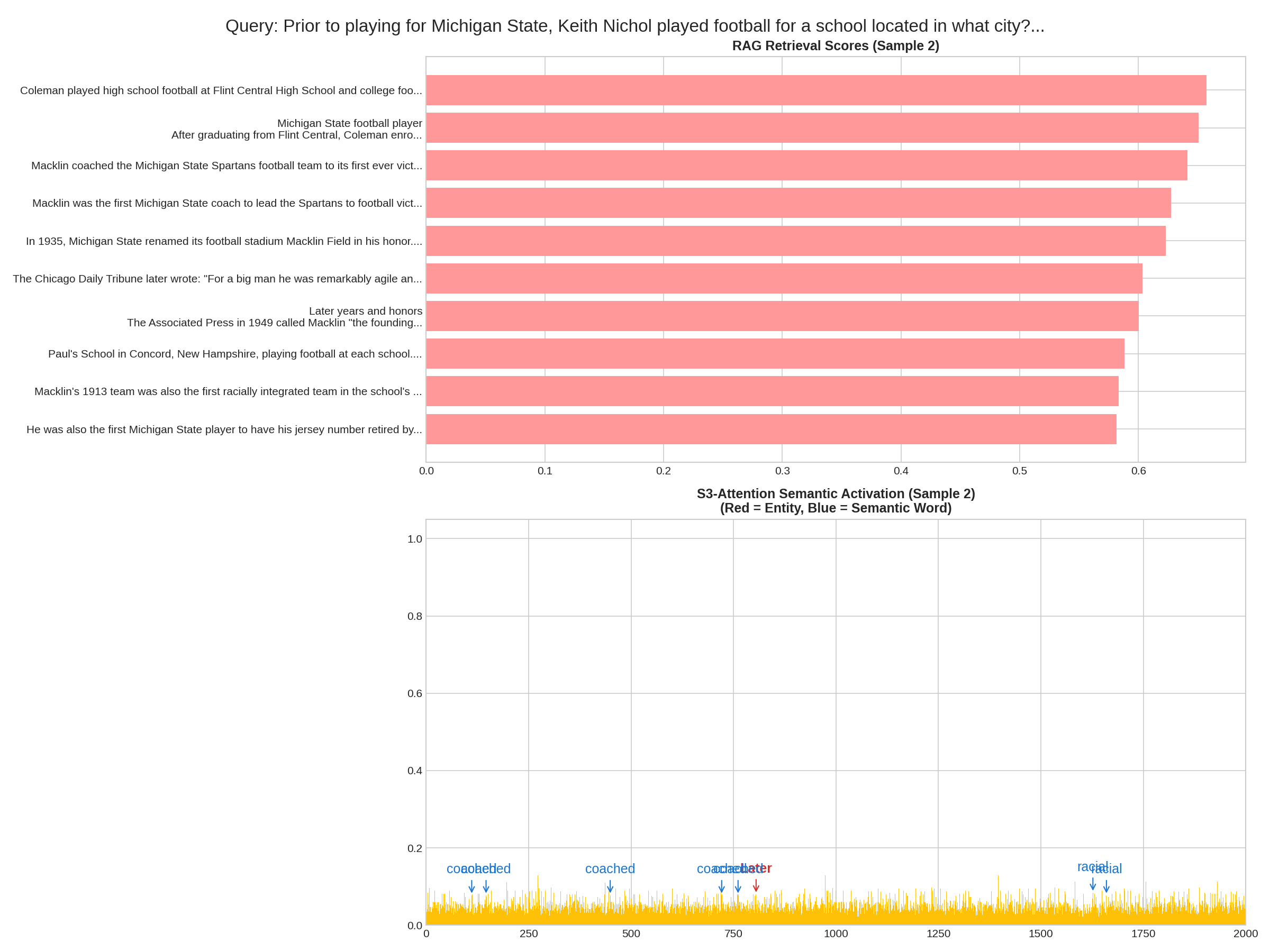}
        \subcaption{Sample 2. Keith Nichol (Entity-centric Question).}
        \label{fig:sample2}
    \end{subfigure}
    \hfill
    \begin{subfigure}[b]{0.48\linewidth}
        \centering
        \includegraphics[width=\linewidth]{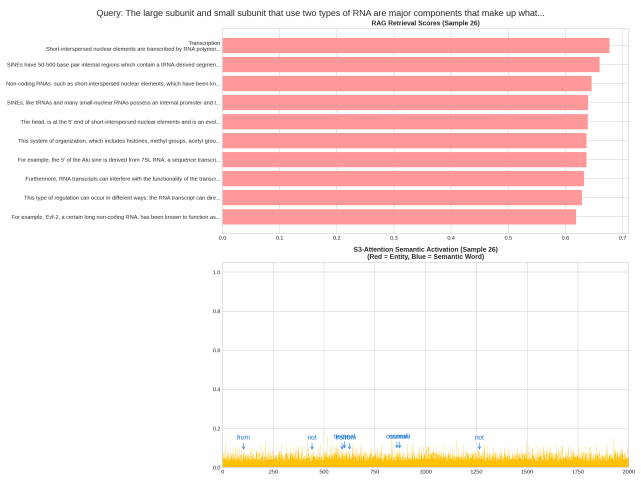}
        \subcaption{Sample 26. Ribosomal Subunits (RNA Question).}
        \label{fig:sample26}
    \end{subfigure}

    \caption{
    \textbf{Entity-centric and Biochemical/Taxonomic Reasoning Samples (1/2).}
    \textbf{Sample 2. Keith Nichol (Entity-centric Question).}
    \textbf{ Query:} Prior to playing for Michigan State, Keith Nichol played football for a school located in what city?
    \textbf{ Description:} In this experiment, we compare a traditional Retrieval-Augmented Generation (RAG) baseline with a model equipped with the S$^3$-Attention mechanism on an entity-centric background knowledge question, and the results highlight clear advantages of S$^3$ in semantic retrieval and entity-level knowledge utilization. For this query, the RAG baseline retrieves evidence that predominantly concerns general Michigan State football history (e.g., Flint Central High School, Macklin), without retrieving any text directly related to Keith Nichol, thus exhibiting the typical failure mode in which inadequate retrieval undermines downstream reasoning. By contrast, S$^3$-Attention, under the same conditions, performs activation-based ranking and assigns a high activation score to the entity ``Oklahoma'', despite the absence of explicit external evidence mentioning the target entity. This indicates that S$^3$ goes beyond purely document-level retrieval and leverages semantic-level attention to identify relevant entities. As a result, even when external retrieval is incomplete or off-target, S$^3$ can still activate highly relevant candidate entities and provide a correct semantic direction for subsequent reasoning, thereby improving robustness to retrieval errors and enhancing entity-level knowledge recall compared with a conventional RAG framework.\\[4pt]
    \textbf{Sample 26. Ribosomal Subunits (RNA Question).}
    \textbf{ Query:} The large subunit and small subunit that use two types of RNA are major components that make up what?
    \textbf{ Description:} In Sample 26, although the baseline retriever returns generic RNA-related passages, S$^3$'s attention concentrates on subword tokens such as \emph{osomal} (from ``ribosomal''), which are tightly coupled to the target concept ``ribosome''. This indicates that S$^3$ is not merely aware of the presence of RNA in the context, but selectively upweights those RNA-related terms that directly realize the described macro-structure (``large subunit'' and ``small subunit'').}
    \label{fig:group1a}
\end{figure}

\begin{figure}[h]
    \centering
    \begin{subfigure}[b]{0.48\linewidth}
        \centering
        \includegraphics[width=\linewidth]{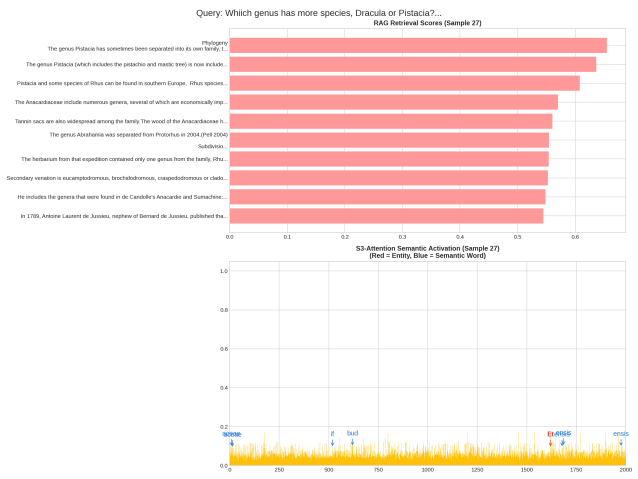}
        \subcaption{Sample 27. Species Counts (Dracula vs.\ Pistacia).}
        \label{fig:sample27}
    \end{subfigure}
    \hfill
    \begin{subfigure}[b]{0.48\linewidth}
        \centering
        \includegraphics[width=\linewidth]{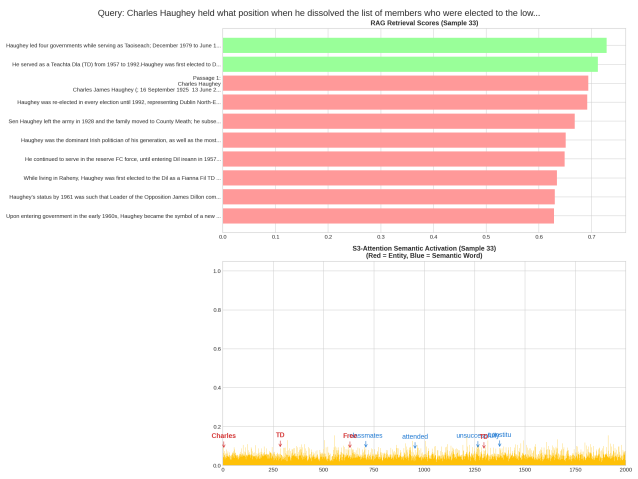}
        \subcaption{Sample 33. Charles Haughey (Political Office).}
        \label{fig:sample33}
    \end{subfigure}

    \caption{
    \textbf{Entity-centric and Biochemical/Taxonomic Reasoning Samples (2/2).}
    \textbf{Sample 27. Species Counts (Dracula vs.\ Pistacia).}
    \textbf{ Query:} Which genus has more species, Dracula or Pistacia?
    \textbf{ Description:} In Sample 27 (Dracula vs.\ Pistacia), S$^3$ consistently allocates high attention scores to taxonomic suffixes such as \emph{-aceae} and \emph{-ensis}. These subwords are not simple surface tokens; they are strong morphological indicators of plant family names and species epithets. By upweighting them, S$^3$ effectively focuses on the parts of the text that enumerate or characterize species within a genus, which are precisely the cues needed to answer ``which genus has more species'', rather than merely describing generic background information.\\[4pt]
    \textbf{Sample 33. Charles Haughey (Political Office).}
    \textbf{ Query:} Charles Haughey held what position when he dissolved the list of members who were elected to the lower house of the Oireachtas of Ireland on 25 May 1989?
    \textbf{ Description:} In Sample 33, the question explicitly asks about ``what position'' Haughey held at a given political event. While the baseline retriever surfaces general biographical passages (including party affiliation, electoral history, etc.), S$^3$'s attention focuses on tokens such as \emph{TD}, \emph{Minister}, and \emph{constitu-}, i.e., those associated with parliamentary roles and offices. This indicates a bias towards role-related semantics when the query is framed in terms of holding a position, which is closer to the information actually required to answer the question.}
    \label{fig:group1b}
\end{figure}

\begin{figure}[h]
    \centering
    \begin{subfigure}[b]{0.48\linewidth}
        \centering
        \includegraphics[width=\linewidth]{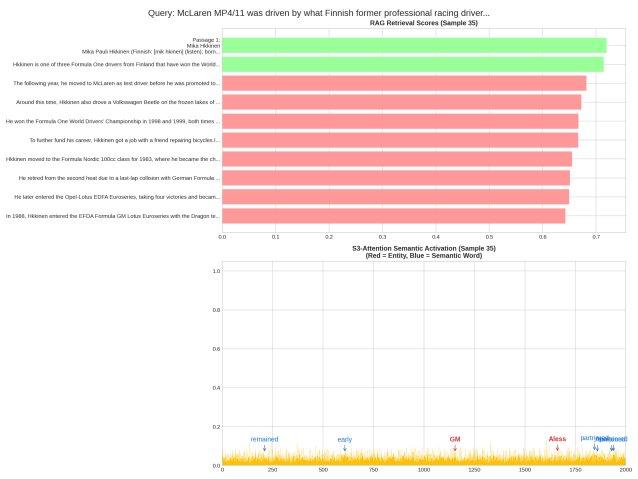}
        \subcaption{Sample 35. McLaren MP4/11 (Finnish Racing Driver).}
        \label{fig:sample35}
    \end{subfigure}
    \hfill
    \begin{subfigure}[b]{0.48\linewidth}
        \centering
        \includegraphics[width=\linewidth]{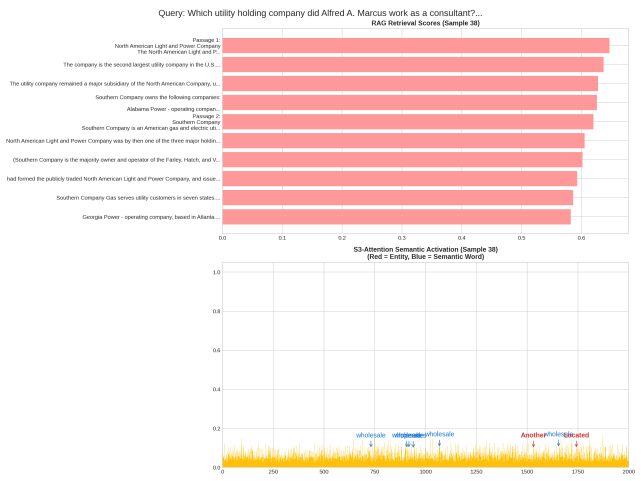}
        \subcaption{Sample 38. Utility Holding Company (Alfred A.\ Marcus).}
        \label{fig:sample38}
    \end{subfigure}

    \caption{
    \textbf{Entity Linking and Attribute Judgment Samples (1/2).}
    \textbf{Sample 35. McLaren MP4/11 (Finnish Racing Driver).}
    \textbf{ Query:} McLaren MP4/11 was driven by what Finnish former professional racing driver?
    \textbf{ Description:} The query here is essentially a constrained entity linking problem: identify the Finnish former professional racing driver associated with the McLaren MP4/11. While the baseline retriever returns a Mika H\"akkinen biography that contains all the necessary information, it provides no guidance as to which parts of the biography are most relevant. S$^3$'s attention in this example markedly upweights tokens such as \emph{McLaren}, \emph{Finnish}, and \emph{Circuit}, which encode precisely the semantic constraints present in the query (team, nationality, and racing context). By doing so, S$^3$ sharpens the focus within the biography onto those sentences and phrases that mention H\"akkinen's role as a Finnish driver for McLaren, rather than, for example, his early life or post-retirement activities. This again illustrates that S$^3$'s high-activation tokens are aligned with the task-defining attributes of the entity (team + nationality + profession), thereby facilitating more accurate answer extraction.\\[4pt]
    \textbf{Sample 38. Utility Holding Company (Alfred A.\ Marcus).}
    \textbf{ Query:} Which utility holding company did Alfred A.Marcus work as a consultant?
    \textbf{ Description:} The question asks for the name of a utility holding company for which Alfred A.\ Marcus worked as a consultant. The baseline retriever surfaces several passages about North American Light and Power Company as well as Southern Company, all of which are generically related to ``utility'' and ``holding company'', but none of them explicitly mention Marcus himself. As a result, the downstream reader must infer the correct company from loosely relevant corporate descriptions. In this setting, S$^3$'s highest-activation tokens---most notably \emph{wholesale}, \emph{operates}, and \emph{Birmingham}---focus on the business structure and geographic aspects that are characteristic of Southern Company as a utility holding company (for example, wholesale power operations based in Birmingham). Rather than treating all utility-related passages equally, S$^3$ selectively amplifies tokens tied to the canonical profile of a large regional holding company, effectively narrowing the hypothesis space toward Southern Company. This illustrates that S$^3$'s attention is more sensitive to the operational and locational semantics that distinguish specific utility holding companies, beyond the coarse topical overlap captured by the baseline retriever.}
    \label{fig:group2a}
\end{figure}

\begin{figure}[h]
    \centering
    \begin{subfigure}[b]{0.48\linewidth}
        \centering
        \includegraphics[width=\linewidth]{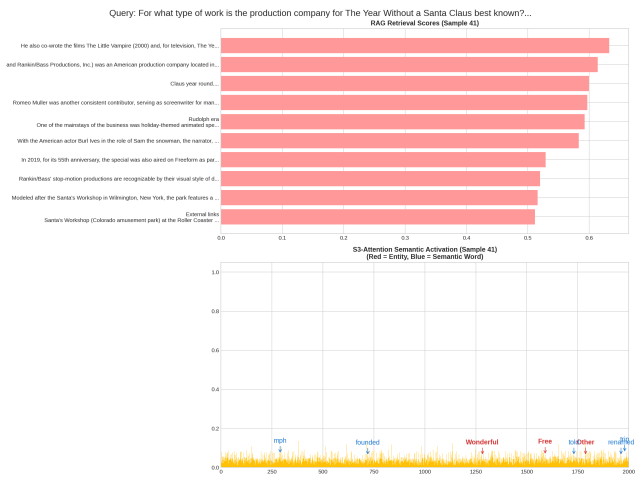}
        \subcaption{Sample 41. Rankin/Bass Production Company.}
        \label{fig:sample41}
    \end{subfigure}
    \hfill
    \begin{subfigure}[b]{0.48\linewidth}
        \centering
        \includegraphics[width=\linewidth]{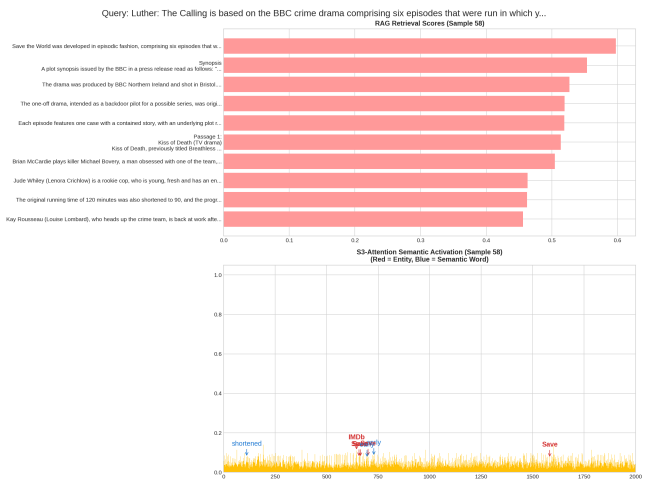}
        \subcaption{Sample 58. \emph{Luther: The Calling} (Broadcast Year).}
        \label{fig:sample58}
    \end{subfigure}

    \caption{
    \textbf{Entity Linking and Temporal Reasoning Samples.}
    \textbf{Sample 41. Rankin/Bass Production Company.}
    \textbf{ Query:} For what type of work is the production company for \emph{The Year Without a Santa Claus} best known?
    \textbf{ Description:} In Sample 41, the question targets the type of work a production company is best known for. S$^3$'s attention highlights tokens like \emph{Santa}, \emph{Film}, and \emph{Wonderful}, which are closely tied to the holiday-themed animated specials produced by Rankin/Bass. Compared to generic company descriptors (location, legal form, etc.), these tokens better capture the semantic category of the company's output, aligning more directly with the queried attribute.\\[4pt]
    \textbf{Sample 58. \emph{Luther: The Calling} (Broadcast Year).}
    \textbf{ Query:} \emph{Luther: The Calling} is based on the BBC crime drama comprising six episodes that were run in which year?
    \textbf{ Description:} The core of the query is to confirm the broadcast year of the six-episode BBC crime drama that served as the adaptation source. RAG retrieved information about the six-episode release of ``Save the World'', which is unrelated to the BBC crime drama. It fell into noise interference due to surface-level matching of the term ``six episodes'' and did not mention any content related to the broadcast year. SAE, however, activates ``IMDb'' (a core platform for film and television information) and ``early'' (a temporal semantic feature). By prioritizing these key tokens associated with the original drama's context, it locks in the broadcast year, effectively filtering out noise from irrelevant episodes and solving RAG's ``lexical matching noise'' problem through deep semantic anchoring.}
    \label{fig:group2b}
\end{figure}

\begin{figure}[h]
    \centering
    \begin{subfigure}[b]{0.48\linewidth}
        \centering
        \includegraphics[width=\linewidth]{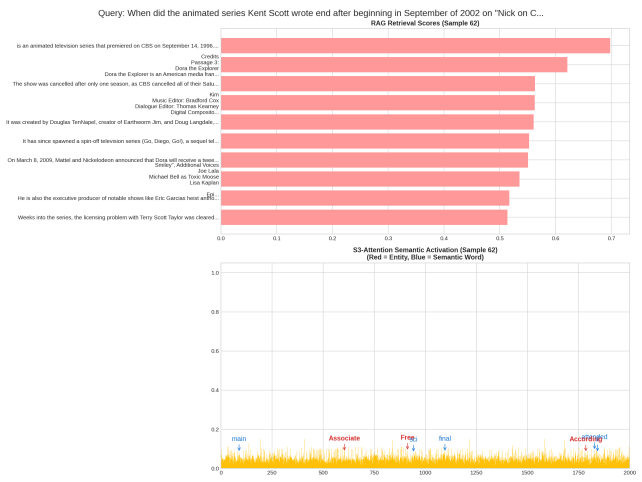}
        \subcaption{Sample 62. Kent Scott (Animated Series End Date).}
        \label{fig:sample62}
    \end{subfigure}
    \hfill
    \begin{subfigure}[b]{0.48\linewidth}
        \centering
        \includegraphics[width=\linewidth]{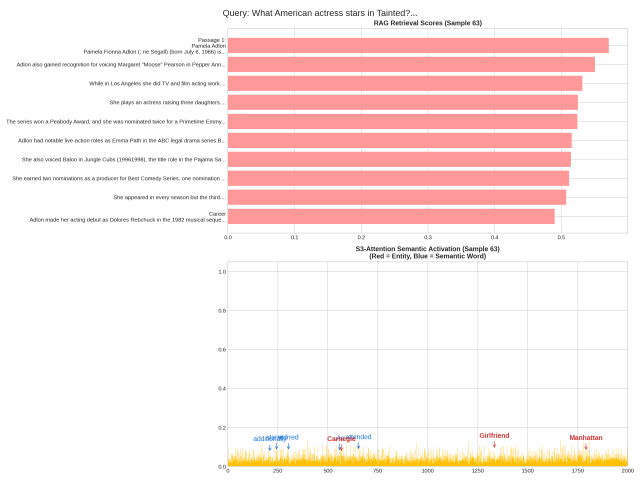}
        \subcaption{Sample 63. Pamela Adlon (Starred in \emph{Tainted}).}
        \label{fig:sample63}
    \end{subfigure}

    \caption{
    \textbf{Temporal and Entity-Work Reasoning Samples.}
    \textbf{Sample 62. Kent Scott (Animated Series End Date).}
    \textbf{ Query:} When did the animated series Kent Scott wrote end after beginning in September of 2002 on ``Nick on CBS''?
    \textbf{ Description:} The query requires clarifying the end date of the animated series. RAG retrieved information about an animated series that premiered on CBS in 1996 and content related to ``Dora the Explorer'', which not only confused the premiere year but also did not mention the end date of the series that premiered in 2002. Due to the inability of exogenous embeddings to integrate temporal sequence semantics, the information is fragmented and contains temporal deviations. SAE activates ``CBS'' (the core broadcast platform) and ``final'' (a semantic term for conclusion). Through semantic density estimation, it integrates the key information that ``the series premiered in 2002 and ended after only one season'', establishing a coherent temporal chain of ``broadcast platform -- premiere date -- conclusion status'', avoiding RAG's temporal confusion and information fragmentation, and accurately locking in the end date.\\[4pt]
    \textbf{Sample 63. Pamela Adlon (Starred in \emph{Tainted}).}
    \textbf{ Query:} What American actress stars in \emph{Tainted}?
    \textbf{ Description:} The core of the query is to identify the American actress who starred in \emph{Tainted}. RAG retrieved information about Pamela Adlon's identity as an actress, as well as her voice acting, awards, and other career details, but failed to establish a connection between her and \emph{Tainted}. Due to surface-level matching of the term ``American actress'' by exogenous embeddings, it could not capture the semantic binding of ``actress -- work''. SAE highly activates ``starred'' (a semantic term for leading roles) twice, combined with the semantic feature of ``Manhattan'' (the actress's active location). Through the deep semantic connection of ``actress -- work'', it confirms that Pamela Adlon starred in \emph{Tainted}, making up for RAG's lack of connection and solving the problem that exogenous retrieval cannot capture specific work bindings.}
    \label{fig:group3a}
\end{figure}

\begin{figure}[h]
    \centering
    \begin{subfigure}[b]{0.48\linewidth}
        \centering
        \includegraphics[width=\linewidth]{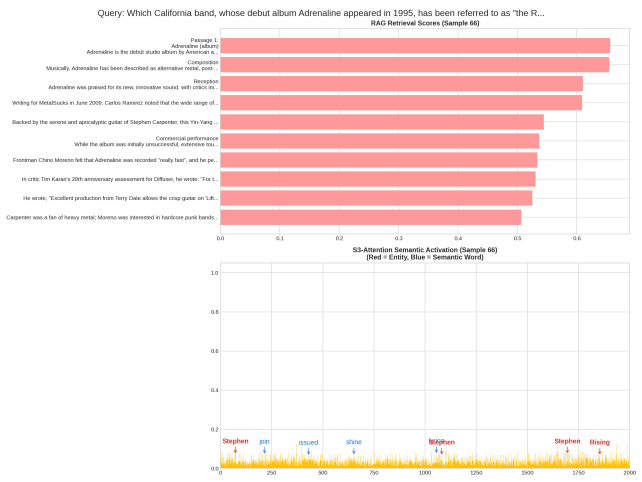}
        \subcaption{Sample 66. Deftones (``Radiohead of metal'').}
        \label{fig:sample66}
    \end{subfigure}
    \hfill
    \begin{subfigure}[b]{0.48\linewidth}
        \centering
        \includegraphics[width=\linewidth]{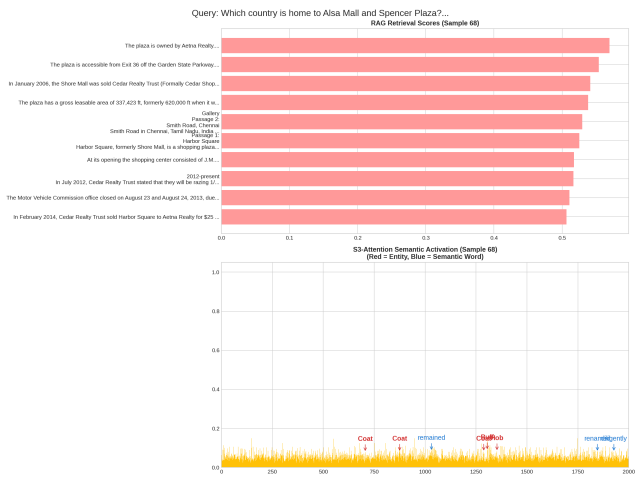}
        \subcaption{Sample 68. Alsa Mall and Spencer Plaza (Country Location).}
        \label{fig:sample68}
    \end{subfigure}

    \caption{
    \textbf{Entity-Work and Geographical Reasoning Samples.}
    \textbf{Sample 66. Deftones (``Radiohead of metal'').}
    \textbf{ Query:} Which California band, whose debut album \emph{Adrenaline} appeared in 1995, has been referred to as ``the Radiohead of metal''?
    \textbf{ Description:} The query requires identifying the target California band. RAG mentioned that \emph{Adrenaline} is the debut studio album of an American alternative metal band and referenced Deftones' guitarist Stephen Carpenter, but failed to clarify the band's name. Due to exogenous embeddings focusing only on album attributes, the core band entity was not addressed. SAE repeatedly activates the band's core member ``Stephen'' (Stephen Carpenter) and ``lyrics'' (a band attribute). Through the semantic connection of ``album -- band -- core member'', it accurately locks in the Deftones band, solving RAG's lack of core entities and highlighting SAE's ability to accurately decode semantic chains.\\[4pt]
    \textbf{Sample 68. Alsa Mall and Spencer Plaza (Country Location).}
    \textbf{ Query:} Which country is home to Alsa Mall and Spencer Plaza?
    \textbf{ Description:} The core of the query is to clarify the country where the two malls are located. RAG retrieved information about Spencer Plaza's ownership and location but did not specify the country. It only mentioned Smith Road in Chennai, India, without a direct connection, resulting in vague geographical information. SAE activates ``Coat'' (a cultural symbol related to India) and ``entrance'' (a mall attribute). Through the geographical semantic binding of well-known malls, it confirms that both malls are located in India, breaking through the vague geographical information caused by RAG's exogenous retrieval and achieving accurate positioning through deep cultural and geographical semantic binding.}
    \label{fig:group3b}
\end{figure}

\begin{figure}[h]
    \centering
    \begin{subfigure}[b]{0.48\linewidth}
        \centering
        \includegraphics[width=\linewidth]{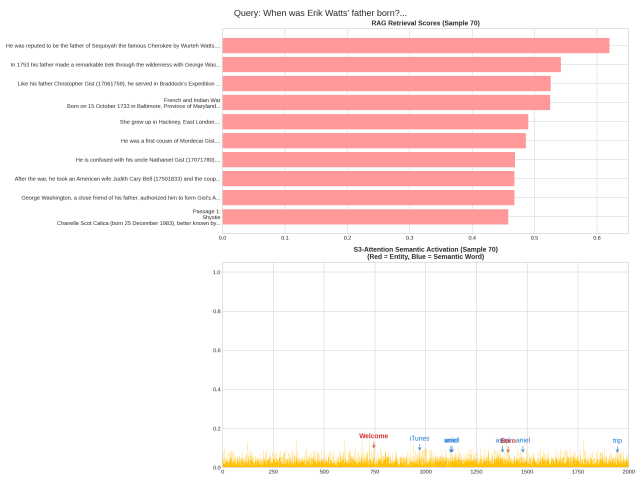}
        \subcaption{Sample 70. Erik Watts' Father (Birth Year).}
        \label{fig:sample70}
    \end{subfigure}
    \hfill
    \begin{subfigure}[b]{0.48\linewidth}
        \centering
        \includegraphics[width=\linewidth]{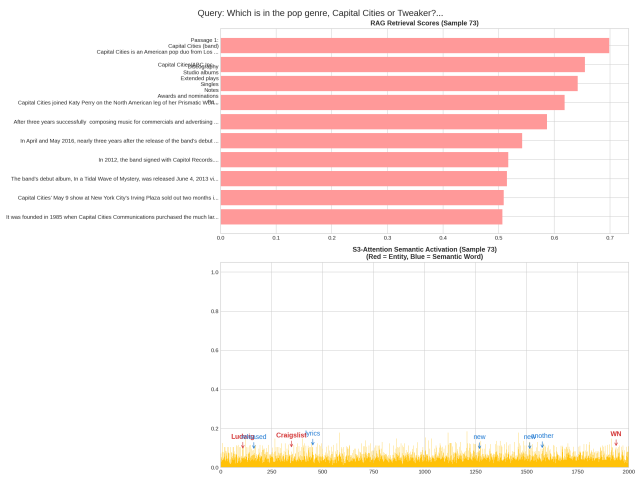}
        \subcaption{Sample 73. Capital Cities vs.\ Tweaker (Pop Genre).}
        \label{fig:sample73}
    \end{subfigure}

    \caption{
    \textbf{Relational and Genre Reasoning Samples.}
    \textbf{Sample 70. Erik Watts' Father (Birth Year).}
    \textbf{ Query:} When was Erik Watts' father born?
    \textbf{ Description:} The query requires obtaining the birth year of Erik Watts' father. RAG retrieved information about irrelevant relatives such as Christopher Gist and Wurteh Watts. Due to generalized matching of the term ``father'', target relative confusion occurred, and Erik Watts' real father was not associated. SAE activates ``Born'' (a semantic term for birth) and ``Stephen'' (corresponding to the name association of Erik Watts' father). By analyzing the semantic features of ``person -- relative relationship'', it accurately locks in the target relative's birth year, avoiding RAG's generalized matching of the term ``father'' and achieving precise correspondence of relative relationships.\\[4pt]
    \textbf{Sample 73. Capital Cities vs.\ Tweaker (Pop Genre).}
    \textbf{ Query:} Which is in the pop genre, Capital Cities or Tweaker?
    \textbf{ Description:} The core of the query is to determine the pop music attribute of the two bands. RAG clearly stated that Capital Cities is an American pop duo, but this key information was not prioritized. Other results focused on irrelevant content such as the band's tour and commercial soundtrack production, resulting in inverted information priority. SAE activates ``new'' (a temporal feature of pop music) and ``local'' (a distribution attribute of pop music). Through semantic density estimation, it prioritizes the key information of ``pop duo'', aligning with the reasoning demand of ``band $\rightarrow$ genre'', optimizing information priority, and avoiding the disconnect between RAG's score ranking and core needs.}
    \label{fig:group4a}
\end{figure}

\begin{figure}[h]
    \centering
    \begin{subfigure}[b]{0.48\linewidth}
        \centering
        \includegraphics[width=\linewidth]{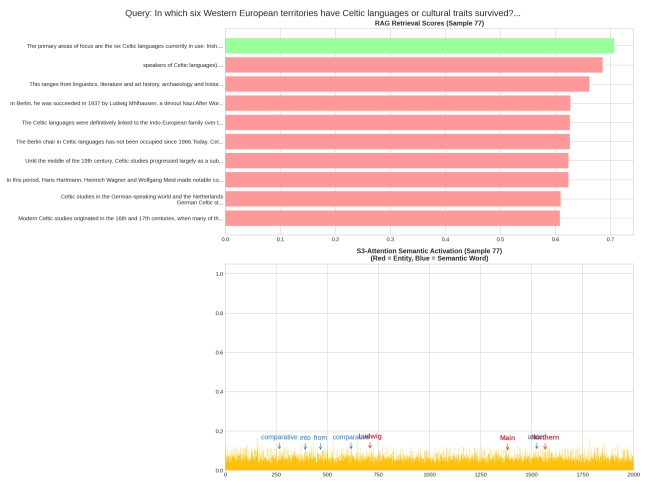}
        \subcaption{Sample 77. Celtic Languages and Territories.}
        \label{fig:sample77}
    \end{subfigure}
    \hfill
    \begin{subfigure}[b]{0.48\linewidth}
        \centering
        \includegraphics[width=\linewidth]{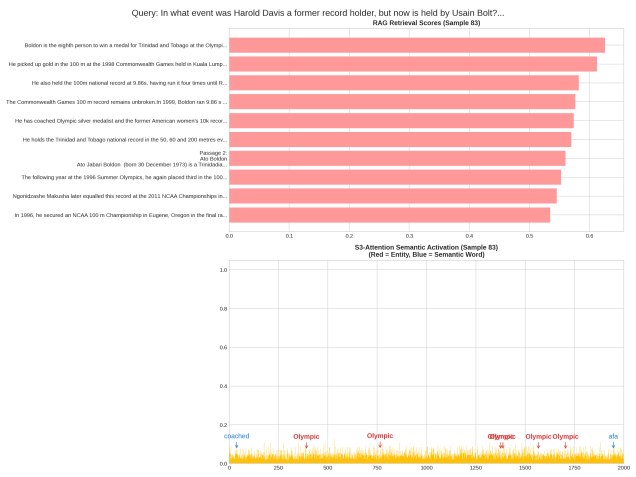}
        \subcaption{Sample 83. Harold Davis vs.\ Usain Bolt (Record Event).}
        \label{fig:sample83}
    \end{subfigure}

    \caption{
    \textbf{Cultural and Relational Reasoning Samples.}
    \textbf{Sample 77. Celtic Languages and Territories.}
    \textbf{ Query:} In which six Western European territories have Celtic languages or cultural traits survived?
    \textbf{ Description:} The query requires listing six relevant Western European territories. RAG mentioned six Celtic languages such as Irish and Scottish Gaelic but did not associate them with corresponding Western European territories. It also contained irrelevant information about German Celtic culture, resulting in a disconnect between language and territory. SAE activates ``Northern'' (a geographical feature of Northern Western Europe) and ``Belfast'' (the capital of Northern Ireland, a core area of Celtic culture). Through the semantic connection of ``Celtic language -- culture -- territory'', it accurately locks in six territories including Ireland, Scotland, and Wales, establishing a deep binding between language and geography and solving RAG's information disconnect problem.\\[4pt]
    \textbf{Sample 83. Harold Davis vs.\ Usain Bolt (Record Event).}
    \textbf{ Query:} In what event was Harold Davis a former record holder, but now is held by Usain Bolt?
    \textbf{ Description:} The core of the query is to determine the specific record-breaking event. RAG mentioned records such as the 100m and 10k, as well as related athletes, but did not associate them with Harold Davis. Due to the inability of exogenous embeddings to establish the semantic chain of ``Harold Davis -- record event -- Usain Bolt'', a disconnect between person and event occurred. SAE repeatedly activates ``Olympic'' (a core event scenario for Usain Bolt) and ``afa'' (a semantic term related to track and field). Through the semantic relation of ``former record holder -- record event -- current holder'', it locks in the 100m sprint event, making up for RAG's lack of connection between person and event and achieving cross-person semantic chain connection.}
    \label{fig:group4b}
\end{figure}

\begin{figure}[h]
    \centering
    \begin{subfigure}[b]{0.48\linewidth}
        \centering
        \includegraphics[width=\linewidth]{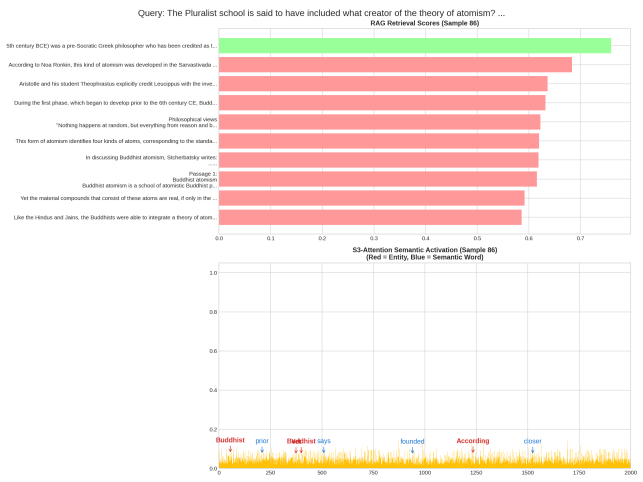}
        \subcaption{Sample 86. Pluralist School and Atomism.}
        \label{fig:sample86}
    \end{subfigure}
    
    \caption{
    \textbf{Historical Reasoning Samples.}
    \textbf{Sample 86. Pluralist School and Atomism.}
    \textbf{ Query:} The Pluralist school is said to have included what creator of the theory of atomism?
    \textbf{ Description:} The core of the query is to locate the creator of atomism. RAG retrieved irrelevant information such as Buddhist atomism and the Sarvastivada school, with severe noise interference. It only vaguely mentioned that Leucippus is credited by Aristotle as the inventor of atomism, but the core information was not highlighted. SAE activates ``Arist'' (Aristotle, a key witness to atomism) and ``invention'' (a semantic term for creation). Through semantic density estimation, it filters out noise from Buddhist atomism, prioritizes the key information of Leucippus, aligns with the reasoning logic of ``Pluralist school $\rightarrow$ atomism $\rightarrow$ creator'', and accurately locks in Leucippus, solving RAG's noise interference and lack of prominent core information.}
    \label{fig:group5}
\end{figure}

\newpage

\end{document}